\newtheorem{thm}{Theorem}
\newtheorem{lem}[thm]{Lemma}
\newtheorem{prop}[thm]{Proposition}
\newtheoremstyle{named}{}{}{}{}{\bfseries}{.}{.5em}{\thmnote{#3}}
\theoremstyle{named}
\newtheorem{nremark}{Remark}
\newcommand{\ones}[1]{\mathbbm{1}^{#1}}
\newcommand{\va}{\mathbf{a}}
\newcommand{\vF}{\mathbf{F}}    
\newcommand{\vu}{\mathbf{u}}    
\newcommand{\vg}{\mathbf{g}}    
\newcommand{\vp}{\mathbf{p}}
\newcommand{\vq}{\mathbf{q}}
\newcommand{\vr}{\mathbf{r}}
\newcommand{\vv}{\mathbf{v}}
\newcommand{\vz}{\mathbf{z}}
\newcommand{\vzero}{\mathbf{0}}
\DeclareMathOperator*{\argmin}{arg\,min}
\DeclareMathOperator*{\argmax}{arg\,max}
\DeclareMathOperator{\sgn}{sgn}
\DeclareMathOperator{\Prtxt}{Pr}
\newcommand{\RR}{\mathbb{R}}      
\newcommand{\ifn}{\mathbf{1}} 
\newcommand{\evp}[2]{\mathbb{E}_{#2} \left[#1\right]} 
\newcommand{\abs}[1]{\left| #1 \right|}
\newcommand{\prp}[2]{\Prtxt_{#2} \left(#1\right)}
\newcommand{\err}[1]{\mbox{err}\left(#1\right)}
\newcommand{\emperr}[2]{\widehat{\mbox{err}}_{#2} \left(#1\right)}
\newcommand*{\qedinpw}{\hfill\ensuremath{\square}} 
\newcommand{\expp}[1]{\exp \left(#1\right)}
\newcommand{\cH}{\mathcal{H}}
\newcommand{\cX}{\mathcal{X}}
\newcommand{\cY}{\mathcal{Y}}
\newcommand{\cD}{\mathcal{D}}
\newcommand{\lrp}[1]{\left(#1\right)}
\begin{document}

\title{PAC-Bayes with Minimax for Confidence-Rated Transduction}

\author{
Akshay Balsubramani \\
\texttt{abalsubr@cs.ucsd.edu} \\
University of California, San Diego
\and
Yoav Freund \\
\texttt{yfreund@cs.ucsd.edu} \\
University of California, San Diego
}
\date{}

\maketitle


\begin{abstract} 
We consider using an ensemble of binary classifiers for transductive prediction, 
when unlabeled test data are known in advance.
We derive minimax optimal rules for confidence-rated prediction in this setting.
By using PAC-Bayes analysis on these rules, 
we obtain data-dependent performance guarantees without distributional assumptions on the data. 
Our analysis techniques are readily extended to a setting in which the predictor is allowed to abstain.
\end{abstract}

\section{Introduction}
Modern applications of binary classification have recently driven renewed theoretical interest 
in the problem of confidence-rated prediction \cite{EYW10, WY11, NCGCVVF11}.
This concerns classifiers which, for any unlabeled data example, 
output an encoding of the classifier's confidence in its own label prediction. 
Confidence values can subsequently be useful for active learning or further post-processing.

Approaches which poll the predictions of many classifiers in an ensemble $\cH$ 
are of particular interest for this problem \cite{FMS04, BLRR04}. 
The Gibbs (averaged) classifier chooses a random rule from the ensemble and predicts with that rule. 
Equivalently, one can say that the prediction on a particular unlabeled example  
is randomly $+1$ or $-1$ with probabilities proportional to the number of votes garnered by the corresponding labels. 
This is intuitively appealing, 
but it ignores an important piece of information - 
the average error of the Gibbs predictor, which we denote by $\lambda$. 
If the ratio between the $+1$ and $-1$ votes is more extreme than $\lambda$, 
then the intuition is that the algorithm should be fairly confident in the majority prediction. 
The main result of this paper is a proof that
a slight variation of this rough argument holds true, 
suggesting ways to aggregate the classifiers in $\cH$ when the Gibbs predictor is not optimal.

We consider a simple transductive prediction model in which the label 
predictor and nature are seen as opponents playing a zero-sum game. 
In this game, the predictor chooses a prediction $g_i \in [-1,1]$ on the $i^{th}$ unlabeled example, 
and nature chooses a label $z_i \in [-1,1]$. 
\footnote{\label{ftnote:stochlabels} This can be thought of as parametrizing a stochastic binary label; 
for instance, $z_i = -0.5$ would be equivalent to choosing the labels $(-1,1)$ 
with respective probabilities $(0.75, 0.25)$.}
The goal of the predictor is to maximize the average correlation $\evp{g_i z_i}{i} = \frac{1}{n} \sum_{i=1}^n g_i z_i$ 
over $n$ unlabeled examples, 
while nature plays to minimize this correlation. 

Without additional constraints, nature could use the trivial strategy of always choosing $z_i = 0$, 
in which case the correlation would be zero regardless of the choices made by the predictor. 
Therefore, we make one assumption -  
that the predictor has access to an ensemble of classifiers which on average have small error. 
Clearly, under this condition nature cannot use the trivial strategy. 
The central question is then: What is the optimal way for the predictor
to combine the predictions of the ensemble?
That question motivates the main contributions of this paper:
\begin{itemize}[leftmargin=*]
\item 
Identifying the minimax optimal strategies for the predictor and for nature, and the resulting minimax value of the game.
\item 
Applying the minimax analysis to the PAC-Bayesian framework to derive PAC-style guarantees. 
We show that the minimax predictor cannot do worse than the average ensemble prediction, 
and quantify situations in which it enjoys better performance guarantees.
\item 
Extending the analysis to the case in which the predictor can \emph{abstain} from committing to any label and instead suffer a fixed loss. 
A straightforward modification of the earlier minimax analysis expands on prior work in this setting.
\end{itemize}

\section{Preliminaries}
\label{sec:setup}
The scenario we have outlined can be formalized with the following definitions.
\begin{enumerate}[leftmargin=2.4em]
\item {\bf Classifier ensemble:} 
A finite set of classification rules $\cH = \{h_1,\ldots,h_H \}$ 
that map examples $x \in \cX$ to labels $y \in \cY := \{-1,1\}$. 
This is given to the predictor, with a distribution $\vq$ over $\cH$.

\item {\bf Test set:} $n$ unlabeled examples $x_i \in \cX$, 
denoted $T = \{x_1,\ldots,x_n\}$. 
\footnote{We could also more generally assume we are given a distribution $\vr$ over $T$ 
which unequally weights the points in $T$. 
The arguments used in the analysis remain unchanged in that case. }

\item {\bf Nature:} 
Nature chooses a vector $\vz \in [-1,1]^n$ encoding the label associated with each test example.
This information is unknown to the predictor. 

\item {\bf Low average error:} 
Recall the distribution $\vq$ over the ensemble given to the predictor. 
We assume that for some $\lambda > 0$,
$\displaystyle \frac{1}{n} \sum_{i=1}^n \sum_{j=1}^H q_j h_j(x_i) z_i = \lambda$.
So the average correlation between the prediction of a randomly chosen classifier from the ensemble
and the true label of a random example from $T$ is at least $\lambda$. 
\footnote{Equivalent to the average classification error 
being $\leq \frac{1}{2} (1 - \lambda)$.}

\item{\bf Notation:} 
For convenience, we denote by $\vF$ the matrix that contains the predictions of
$(h_1,\ldots,h_H)$ on the examples $(x_1,\ldots,x_n)$. 
$\vF$ is independent of the true labels, and is fully known to the predictor.
\begin{equation}
\vF = 
 \begin{pmatrix}
   h_1(x_1) & h_2(x_1) & \cdots & h_H (x_1) \\
   h_1(x_2) &  h_2(x_2) & \cdots & h_H (x_2) \\
   \vdots   & \vdots    & \ddots &  \vdots  \\
   h_1(x_n)  &  h_2(x_n)  & \cdots &  h_H (x_n)
 \end{pmatrix}
\end{equation}
In this notation the bound on the average error is expressed as:
$\frac{1}{n} \vz^\top \vF \vq \geq \lambda$.

\end{enumerate}


\section{The Confidence-Rated Prediction Game}
\label{sec:game1}

In this game, the goal of the predictor is to find a function $g : T \mapsto [-1,1]^n$ (a vector in $ \RR^n $), 
so that each example $x_i$ maps to a confidence-rated label prediction $g_i \in [-1,1]$. 
The predictor maximizes the worst-case correlation with the true labels $\vz^\top \vg$, 
predicting with the solution $\vg^*$ to the following game:
\begin{eqnarray}
\label{game1eq}
\mbox{Find: }&\displaystyle \max_{\vg} \min_{\vz} \;\; \frac{1}{n} \vz^\top \vg \;\; \\
\mbox{Such that: }& \frac{1}{n} \vz^\top \va \geq \lambda \mbox{ and } -\ones{n} \leq \vz \leq \ones{n} , \nonumber \\
& -\ones{n} \leq \vg \leq \ones{n} \notag
\end{eqnarray}
where $\va := \vF \vq \in \RR^n$ represents the ensemble predictions on the dataset:
\begin{align*}
\va =
 \begin{pmatrix}
   \displaystyle
   \sum_{j=1}^H q_j h_j(x_1) \;,\; \sum_{j=1}^H q_j h_j(x_2) ,\dots, \sum_{j=1}^H q_j h_j(x_n)
 \end{pmatrix}^\top
\end{align*}

It is immediate from the formulation \eqref{game1eq} that the predictor can simply play $\vg = \va$, 
which will guarantee correlation $\lambda$ due to the average performance constraint.
This is the prediction made by the Gibbs classifier $h_\vq (x) = \evp{h(x)}{h \sim \vq}$, 
which averages across the ensemble. 
We now identify the optimal strategy for the predictor and show
when and by how much it outperforms $h_\vq$.
\footnote{\label{ftnote:identensemble}
If every $h \in \cH$ makes identical predictions on the dataset 
and has correlation $\lambda$ with the true labels, 
then the ensemble effectively has just one element, 
so outperforming it is impossible without outside information.}


\subsection{Analytical Solution}
\label{sec:game1analysis}

Consider the game as defined in \eqref{game1eq}. 
The conditions for minimax duality hold here (Prop. \ref{prop:game1duality} in the appendices for completeness), 
so the minimax dual game is
\begin{eqnarray}
\displaystyle \min_{\substack{ \vz \in [-1,1]^n , \\ \frac{1}{n} \vz^\top \va \geq \lambda }} \max_{\vg \in [-1,1]^n} \;\; \frac{1}{n} \vz^\top \vg  \label{game1dual}
\end{eqnarray}

Without loss of generality, we can reorder the examples so that the
following condition holds on the vector of ensemble predictions $\va$.
\begin{nremark}[Ordering \ref{remark:regorder}]
\label{remark:regorder}
Order the examples so that $\displaystyle \abs{a_1} \geq \abs{a_2} \geq \dots \geq \abs{a_n}$. 
\qedinpw
\end{nremark}

Our first result expresses the minimax value of this game. 
(All proofs are deferred to the appendices.)

\begin{lem}
\label{lem:game1val}
Using Ordering \ref{remark:regorder} of the examples, 
let $v = \min \left\{ i \in [n] : \frac{1}{n} \sum_{j=1}^i \abs{a_j} \geq \lambda \right\}$. 
Then the value of the game \eqref{game1eq} is 
$\displaystyle V := \frac{v-1}{n} + \frac{1}{\abs{a_v}} \lrp{ \lambda - \frac{1}{n} \sum_{i=1}^{v-1} \abs{a_i} }$.
\end{lem}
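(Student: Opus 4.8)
The plan is to work with the dual game \eqref{game1dual}, whose equivalence to \eqref{game1eq} is granted by minimax duality, and to solve it by a two-stage reduction: first eliminate $\vg$ analytically, then recognize the remaining minimization over $\vz$ as a fractional knapsack that the greedy threshold solution solves exactly.

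First I would dispatch the inner maximization. For fixed $\vz$, the objective $\frac{1}{n}\vz^\top\vg$ is linear in $\vg$ over the box $[-1,1]^n$, so it is maximized coordinatewise by $g_i = \sgn(z_i)$, with value $\frac{1}{n}\sum_{i=1}^n \abs{z_i}$. The dual game therefore collapses to
\begin{equation*}
\min_{\substack{\vz \in [-1,1]^n, \\ \frac{1}{n}\vz^\top\va \ge \lambda}} \frac{1}{n}\sum_{i=1}^n \abs{z_i}.
\end{equation*}
Next I would pass to the magnitudes $t_i := \abs{z_i}$. Since $z_i a_i \le \abs{z_i}\abs{a_i}$, with equality when $z_i = \sgn(a_i)\abs{z_i}$, replacing any feasible $\vz$ by its sign-aligned version $z_i = \sgn(a_i)t_i$ leaves the objective unchanged and can only increase $\frac{1}{n}\vz^\top\va$, hence preserves feasibility; so an optimal $\vz$ may be taken sign-aligned. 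The problem becomes the linear program
\begin{equation*}
\min_{\vt \in [0,1]^n} \frac{1}{n}\sum_{i=1}^n t_i \quad\text{subject to}\quad \frac{1}{n}\sum_{i=1}^n t_i \abs{a_i} \ge \lambda,
\end{equation*}
which asks to meet a minimum weighted requirement at least total cost, each coordinate carrying unit cost and efficiency $\abs{a_i}$.

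The heart of the argument is to show the threshold solution is optimal. Under Ordering \ref{remark:regorder}, set $t_1 = \dots = t_{v-1} = 1$, choose $t_v \in [0,1]$ to make the constraint tight, and set the remaining coordinates to zero, where $v$ is as in the statement; the minimality of $v$ guarantees that such a $t_v$ exists in $[0,1]$. A direct computation shows this feasible point attains exactly the claimed value $V$, so it remains to prove no feasible $\vt$ does strictly better. I expect this optimality step to be the main obstacle, and would argue it by an exchange: because the $\abs{a_i}$ are sorted in decreasing order, whenever positive mass sits on a low-efficiency coordinate $i$ while some higher-efficiency coordinate $j<i$ is unsaturated, transferring weight from $i$ to $j$—decreasing $t_i$ and increasing $t_j$ so as to hold $\sum_k t_k\abs{a_k}$ fixed—strictly lowers the total cost $\sum_k t_k$, since $\abs{a_j}\ge\abs{a_i}$ means less mass is needed on $j$ to replace the weight lost on $i$. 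Iterating drives any optimum to the threshold form, and likewise any slack in the constraint can be removed by shrinking some $t_i$, so the constraint is tight at the optimum.

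An equivalent and perhaps cleaner route that I would mention as a cross-check is LP duality: exhibiting the multiplier $\mu = 1/\abs{a_v}$ for the requirement constraint, together with the appropriate box multipliers, and verifying complementary slackness certifies simultaneously that $V$ is achieved by the threshold point and that $V$ is a lower bound for every feasible $\vt$. Either way, the minimax value of \eqref{game1eq} equals the optimal objective of this knapsack, namely $V$.
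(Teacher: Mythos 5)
Your proposal is correct and takes essentially the same route as the paper's proof: both pass to the dual game \eqref{game1dual}, eliminate $\vg$ via $g_i = \sgn(z_i)$ to reduce nature's problem to minimizing $\frac{1}{n}\sum_{i=1}^n \abs{z_i}$ subject to $\frac{1}{n}\vz^\top\va \geq \lambda$, and solve that as a fractional knapsack via the greedy threshold solution. Your explicit sign-alignment step, exchange argument, and LP-duality certificate simply make rigorous what the paper argues informally (its marginal-increment reasoning and the remark that the claim ``can also be shown by checking the KKT conditions'').
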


This allows us to verify the minimax optimal strategies.
\begin{thm}
\label{thm:game1soln}
Suppose the examples are in Ordering \ref{remark:regorder}, 
and let $v$ be as defined in Lemma \ref{lem:game1val}. 
The minimax optimal strategies for the predictor ($\vg^*$) and nature ($\vz^*$) 
in the game \eqref{game1eq} are:
\begin{align*} 
&g_i^* = \begin{cases} \sgn(a_i) & i \leq v \\ \frac{a_i}{\abs{a_v}} & i > v \end{cases} \\ 
&z_i^* = \begin{cases} \sgn(a_i) & i < v \\ \frac{1}{a_i} \lrp{ n \lambda - \sum_{i=1}^{v-1} \abs{a_i} } & 
i = v \\ 0 & i > v \end{cases} 
\end{align*}
\end{thm}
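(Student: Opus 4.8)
The plan is to certify that $(\vg^*, \vz^*)$ is a saddle point achieving the value $V$ of Lemma \ref{lem:game1val}, for which weak minimax duality ($\max_\vg \min_\vz \leq \min_\vz \max_\vg$) makes two inequalities sufficient. First I would dispatch the routine feasibility and value computations. Under Ordering \ref{remark:regorder} we have $\abs{a_i} \leq \abs{a_v}$ for $i > v$, so $\vg^* \in [-1,1]^n$. For $\vz^*$, minimality of $v$ gives $\sum_{i=1}^{v-1}\abs{a_i} < n\lambda \leq \sum_{i=1}^{v}\abs{a_i}$, which forces $0 < n\lambda - \sum_{i=1}^{v-1}\abs{a_i} \leq \abs{a_v}$ (and in particular $\abs{a_v} > 0$), so $\abs{z_v^*} \leq 1$ and $\vz^* \in [-1,1]^n$. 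A direct substitution then shows the constraint binds, $\frac{1}{n}(\vz^*)^\top \va = \lambda$, and that $\frac{1}{n}(\vz^*)^\top \vg^* = V$.

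Next comes the easy inequality: $\vz^*$ guarantees nature a value of at most $V$. For fixed $\vz^*$ the predictor's best response is $g_i = \sgn(z_i^*)$, so $\max_{\vg \in [-1,1]^n} \frac{1}{n}(\vz^*)^\top \vg = \frac{1}{n}\sum_i \abs{z_i^*}$. Since $\vz^*$ consists of $v-1$ entries of magnitude one, the single entry $z_v^*$, and zeros thereafter, this sum evaluates to $\frac{v-1}{n} + \frac{1}{\abs{a_v}}\lrp{\lambda - \frac{1}{n}\sum_{i=1}^{v-1}\abs{a_i}} = V$. Hence $\min_\vz \max_\vg \leq V$.

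The main obstacle is the opposite inequality, that $\vg^*$ guarantees at least $V$, i.e. $\min_{\vz} \frac{1}{n}\vz^\top \vg^* \geq V$ over feasible $\vz$. I would treat this inner minimization as a linear program and apply Lagrangian relaxation to its one nontrivial constraint $\frac{1}{n}\vz^\top \va \geq \lambda$. Writing $w_i := z_i\,\sgn(a_i) \in [-1,1]$ turns the objective into $\sum_i w_i c_i$ with $c_i = \min(1, \abs{a_i}/\abs{a_v})$ and the constraint into $\sum_i w_i \abs{a_i} \geq n\lambda$, and the dual function becomes $L(\mu) = \mu n\lambda - \sum_i \abs{c_i - \mu\abs{a_i}}$. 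The crux is that the multiplier $\mu^* = 1/\abs{a_v}$ — precisely the coefficient appearing in $\vg^*$ for $i > v$ — makes $c_i - \mu^*\abs{a_i}$ vanish for every $i > v$ and nonpositive for $i \leq v$, so the penalty sum telescopes and $L(\mu^*)$ collapses to exactly $nV$. Weak Lagrangian duality then gives $\vz^\top \vg^* \geq nV$, i.e. $\frac{1}{n}\vz^\top \vg^* \geq V$ for all feasible $\vz$.

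Combining the two inequalities with weak minimax duality pins the value at $V$ and certifies both $\vg^*$ and $\vz^*$ as optimal; the equality $\mu^* = 1/\abs{a_v}$ is the complementary-slackness link explaining why this multiplier surfaces in both $\vg^*$ and $V$. I expect the only care needed to be in the boundary case $v = 1$, where the empty sum $\sum_{i=1}^{0}\abs{a_i}$ vanishes and the bound $\abs{z_1^*} \leq 1$ reduces to $n\lambda \leq \abs{a_1}$ (exactly the defining condition for $v=1$); these degeneracies do not disturb the structure of the argument.
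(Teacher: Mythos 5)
Your proposal is correct and follows essentially the same route as the paper: both certify the saddle point via the two one-sided guarantees, with the $\vz^*$ bound coming from the predictor's best response $\sgn(\vz^*)$ (so the value is $\frac{1}{n}\sum_i \abs{z_i^*} = V$), and the $\vg^*$ bound coming from using the average-correlation constraint with weight $1/\abs{a_v}$. Your Lagrangian relaxation at $\mu^* = 1/\abs{a_v}$ is precisely the paper's step of substituting $\frac{1}{n}\vz^\top\va \geq \lambda$ into the payoff to cancel the $i > v$ terms, just packaged in LP-duality language.
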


\begin{figure}[b]
\centering
\begin{minipage}[b]{0.45\linewidth}
\centering
\includegraphics[height=1.2in, width=0.8\textwidth]{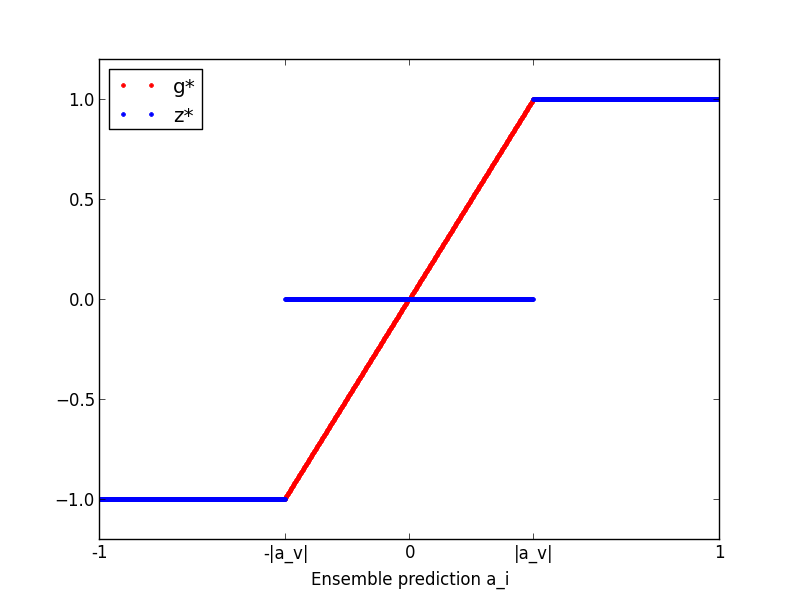}
\caption{Optimal strategies $\vg^*, \vz^*$ for the game without abstention 
(Thm. \ref{thm:game1soln}), plotted against $a_i$. }
\label{fig:minipage1}
\end{minipage}
\quad
\begin{minipage}[b]{0.45\linewidth}
\centering
\includegraphics[height=1.2in, width=0.8\textwidth]{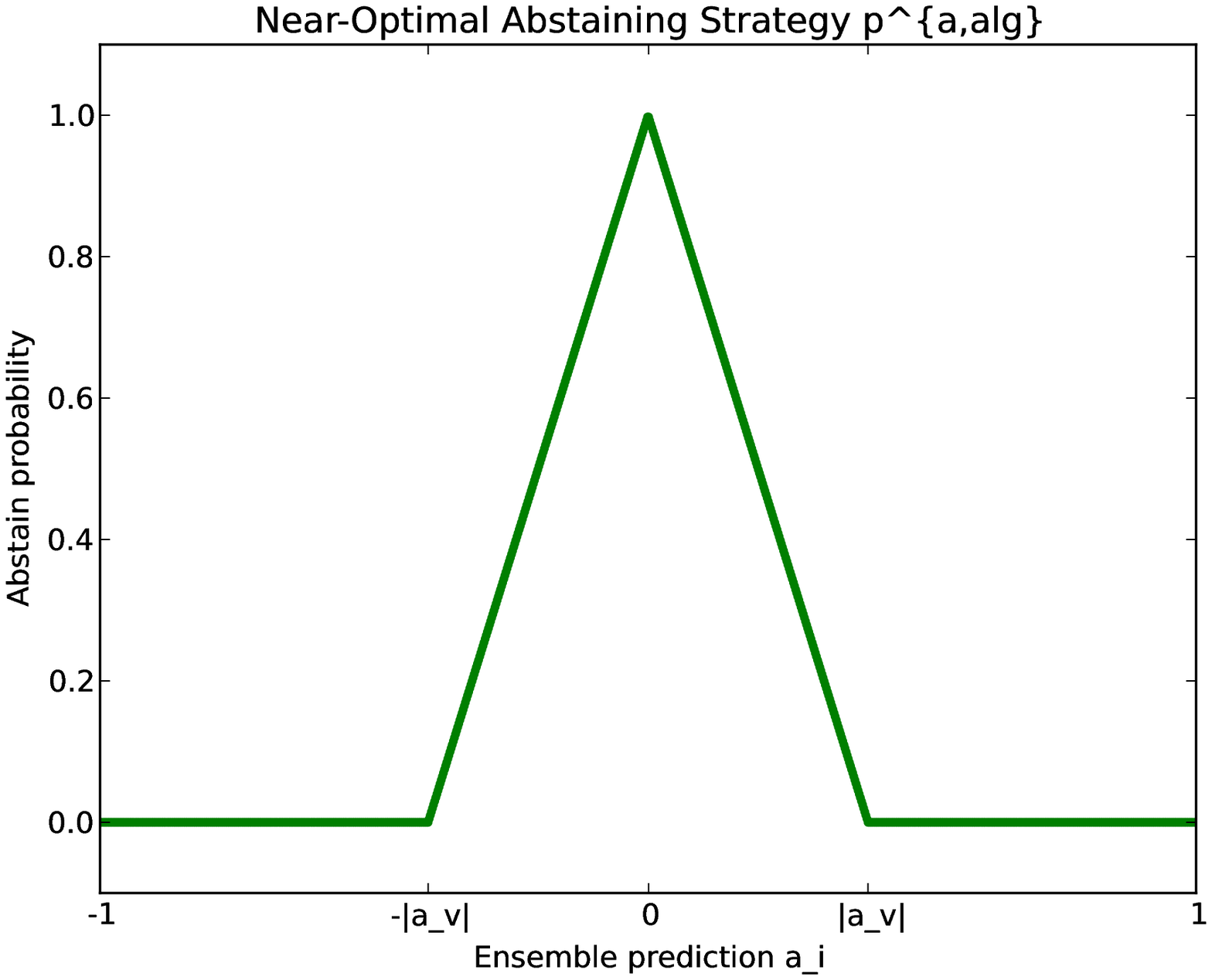}
\caption{Near-optimal abstain probabilities $\vp^{a,alg}$ (Thm. \ref{thm:gameabsapprox}), plotted against $a_i$.}
\label{fig:minipage2}
\end{minipage}
\end{figure}


\subsection{Discussion}
\label{sec:1gamediscuss}

As shown in Fig. \ref{fig:noabsplot}, the optimal strategy $\vg^*$ depends on the Gibbs classifier's prediction $\va$ in an elegant way. 
For the $V$ fraction of the points on which the ensemble is most certain (indices $i < v$), 
the minimax optimal prediction is the deterministic majority vote. 

For any example $i$, 
$g_i^*$ is a nondecreasing function of $a_i$, 
which is often an assumption on $\vg$ in similar contexts \cite{ZE02}. 
In our case, this monotonic behavior of $\vg^*$ arises \emph{entirely} from nature's average error constraint; 
$\vg$ itself is only constrained to be in a hypercube.

When $\lambda \approx 0$ ($v \approx 1$), $\vg^*$ approximates the ensemble prediction.  
But as the ensemble's average correlation $\lambda$ increases, 
the predictor is able to act with certainty ($\abs{g_i^*} = 1$) 
on a growing number of examples, 
even when the vote is uncertain ($\abs{a_i} < 1$). 

The value of the game can be written as 
\begin{align}
\label{valtbound}
V &= \frac{v-1}{n} + \frac{1}{\abs{a_v}} \lrp{ \lambda - \frac{1}{n} \sum_{i=1}^{v-1} \abs{a_i} } \nonumber \\
&\geq \frac{v-1}{n} + \lambda - \frac{1}{n} \sum_{i=1}^{v-1} \abs{a_i}
= \lambda + \frac{1}{n} \sum_{i=1}^{v-1} \lrp{1 - \abs{a_i}}
\end{align}
This shows that predicting with $\vg^*$ cannot hurt performance relative to the average ensemble prediction, 
and indeed will help when there are disagreements in the ensemble on high-margin examples. 
The difference $V - \lambda = \frac{1}{n} \sum_{i=1}^{v-1} \lrp{1 - \abs{a_i}}$ 
quantifies the benefit of our prediction rule's voting rule as opposed to the Gibbs classifier's averaged prediction. 

Our minimax analysis is able to capture this uniquely vote-based behavior because of the transductive setting. 
The relationships between the classifiers in the ensemble determine the performance of the vote, 
and analyzing such relationships is much easier in the transductive setting. 
There is a dearth of applications of this insight in existing literature on confidence-rated prediction, 
with \cite{SV08} being a notable exception.

In this work, the predictor obtains a crude knowledge of the hypothesis predictions $\vF$ 
through ensemble predictions $\va$, and is thereby able to quantify the benefit of a vote-based predictor in terms of $\va$. 
However, the loss of information in compressing $\vF$ into $\va$ is unnecessary, 
as $\vF$ is fully known to the predictor in a transductive setting. 
It would therefore be interesting (and provide a tighter analysis) 
to further incorporate the structure of $\vF$ into the prediction game in future work.

\begin{figure}[t]
\centering

\includegraphics[height=1.5in, width=0.6\linewidth]{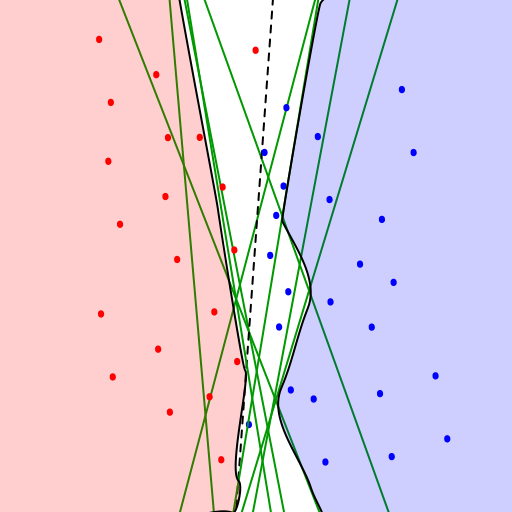}
\caption{An illustration of the minimax optimal predictor $g^*$, 
with the linear separators being low-error hypotheses in $\cH$, 
and the examples colored according to their true labels. 
The red and blue shaded areas indicate where the predictor is certain. }
%
\end{figure}



\section{A PAC-Bayes Analysis of a Transductive Prediction Rule}
\label{sec:pbanalysis}

The minimax predictor we have described relies on a known average correlation 
$\lambda > 0$ between the ensemble predictions and true labels. 
In this section, 
we consider a simple transductive statistical learning scenario 
in which the ensemble distribution $\vq$ is learned from a training set 
using a PAC-Bayesian criterion, 
giving a statistical learning algorithm with a PAC-style analysis. 

Suppose we are in a transductive setting with a training set $S$ with known labels, 
and a test set $T$ with unknown labels. 
$S$ and $T$ are assumed to be composed of labeled examples drawn i.i.d. from a distribution $\cD$ over $\cX \times \cY$. 
We write $\abs{S} = m$ and consider $\abs{T} > m$. 

Denote the true error of a hypothesis $h \in \cH$ by $\err{h} = \prp{h(x) \neq y}{\cD}$, 
its empirical error on $S$ by $\emperr{h}{S} = \frac{1}{m} \sum_{(x,y) \in S} \ifn(h(x) \neq y)$, 
and its empirical error on $T$ by $\emperr{h}{T}$. 
Also, for any $p,q \in [0,1]$ define $KL\lrp{ p \mid \mid q } = p \log \frac{p}{q} + (1-p) \log \frac{1-p}{1-q}$, 
and otherwise define the KL divergence $KL\lrp{ \vp \mid \mid \vq }$ between two distributions $\vp, \vq \in \RR^n$ in the usual way as
$\sum_{i=1}^n p_i \log \frac{p_i}{q_i}$.

Finally, define $ \epsilon (m, \vq, \vq_0, \delta) := \sqrt{ \frac{2}{m} \lrp{ KL (\vq \mid \mid \vq_0) + \log\lrp{\frac{2(m+1)}{\delta}} } }$ 
and the error-minimizing hypothesis $h^* = \argmin_{h \in \cH} \err{h}$.

\subsection{An Algorithm with a PAC-Bayes Analysis}
\label{sec:pbalgbasic}

\textbf{Algorithm:  }The learning algorithm we consider simply observes the labeled set $S$ 
and chooses a distribution $\vq$ over $\cH$ that has low error on $S$. 
Based on this, it calculates a lower bound $\hat{\lambda}$ (Eq. \eqref{1phaselambda})
 on the correlation with $T$ of the associated Gibbs classifier.  
Finally, it uses $\hat{\lambda}$ in the average error constraint for the game of Section \ref{sec:game1}, 
and predicts with the corresponding minimax optimal strategy $\vg^*$ as given in Theorem \ref{thm:game1soln}.

\textbf{Analysis:  }We begin analyzing this algorithm by applying the PAC-Bayes theorem (\cite{L05}) 
to control $\evp{\emperr{h}{S}}{h \sim \vq}$, which immediately yields the following.

\begin{lem}
\label{lem:1phasepbonS}
Choose any prior distribution $\vq_0$ over $\cH$. 
With probability $\geq 1 - \delta$ over the choice of training set $S$, for all distributions $\vq$ over $\cH$ simultaneously,
\begin{align}
KL &\lrp{ \evp{\emperr{h}{S}}{h \sim \vq} \mid \mid \evp{\err{h}}{h \sim \vq} } \\ 
&\leq \frac{1}{m} \lrp{ KL (\vq \mid \mid \vq_0) + \log\lrp{\frac{m+1}{\delta}} } \nonumber
\end{align}
\end{lem}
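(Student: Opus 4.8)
The plan is to obtain the claim as a direct application of the PAC-Bayes theorem of \cite{L05}, identifying $\evp{\emperr{h}{S}}{h \sim \vq}$ with the empirical Gibbs error on $S$ and $\evp{\err{h}}{h \sim \vq}$ with the true Gibbs error. For completeness I would reconstruct the three standard ingredients and track constants to recover exactly the stated complexity term. First I would reduce the Gibbs-level divergence to a posterior average of per-hypothesis divergences: since $KL\lrp{p \mid \mid q}$ is jointly convex in $(p,q)$, Jensen's inequality applied under $\vq$ gives
\[
KL\lrp{ \evp{\emperr{h}{S}}{h \sim \vq} \mid \mid \evp{\err{h}}{h \sim \vq} } \leq \evp{ KL\lrp{\emperr{h}{S} \mid \mid \err{h}} }{h \sim \vq}.
\]

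Second, I would pass from the fixed prior $\vq_0$ to the arbitrary posterior $\vq$ via the Donsker--Varadhan change-of-measure inequality: for any measurable $\phi$,
\[
\evp{\phi(h)}{h \sim \vq} \leq KL(\vq \mid \mid \vq_0) + \log \evp{ \expp{\phi(h)} }{h \sim \vq_0}.
\]
Taking $\phi(h) = m \cdot KL\lrp{\emperr{h}{S} \mid \mid \err{h}}$ and combining with the first step bounds the (scaled) left-hand side by $KL(\vq \mid \mid \vq_0) + \log Z$, where $Z := \evp{ \expp{m\, KL(\emperr{h}{S} \mid \mid \err{h})} }{h \sim \vq_0}$ depends only on $S$ and on the data-independent prior $\vq_0$, not on $\vq$.

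Third, I would control $Z$ in expectation over $S$. Because $\vq_0$ is chosen before $S$, Fubini permits exchanging the two expectations, so $\EV_S[Z] = \evp{ \EV_S\, \expp{m\, KL(\emperr{h}{S} \mid \mid \err{h})} }{h \sim \vq_0}$. For a single $h$, writing $m\,\emperr{h}{S}$ as a sum of $m$ i.i.d. Bernoulli($\err{h}$) variables and expanding term by term, the quantity $\binom{m}{k}\err{h}^k(1-\err{h})^{m-k}\expp{m\,KL(k/m \mid \mid \err{h})}$ collapses, after the $\err{h}$ factors cancel, to the error-free expression $\binom{m}{k}(k/m)^k(1-k/m)^{m-k}$, a binomial probability bounded by $1$; summing over $k \in \{0,\dots,m\}$ yields $\EV_S[Z] \leq m+1$. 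Markov's inequality then gives $Z \leq (m+1)/\delta$ with probability $\geq 1-\delta$ over $S$. On this event the change-of-measure bound holds deterministically for every $\vq$ simultaneously, so chaining it with the Jensen step and dividing by $m$ produces the stated inequality, uniform in $\vq$ because the random event constrains only $Z$.

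The step I expect to require the most care is the moment-generating-function computation that produces exactly the factor $m+1$ (and hence the $\log\frac{m+1}{\delta}$ term): the cancellation of $\err{h}$ inside the Bernoulli MGF is precisely what makes $\EV_S[Z]$ bounded independently of each hypothesis's unknown error, giving uniformity over all $h$ and all posteriors at once. It is easy to lose this tightness if the convexity and change-of-measure steps are applied in the wrong order, so I would keep $Z$ defined against the fixed prior throughout and invoke Markov only on that prior-dependent quantity.
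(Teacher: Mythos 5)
Your proposal is correct and follows exactly the route the paper takes: the paper proves this lemma simply by citing the PAC-Bayes theorem of \cite{L05} (``applying the PAC-Bayes theorem \ldots immediately yields the following''), and your three steps --- joint convexity of $KL$ via Jensen, Donsker--Varadhan change of measure against the fixed prior $\vq_0$, and the Bernoulli moment bound $\EV_S[Z] \leq m+1$ followed by Markov --- are precisely the standard proof of that cited theorem, recovering the stated constant $\log\lrp{\frac{m+1}{\delta}}$ exactly. No gaps; your closing caution about applying Markov only to the prior-dependent $Z$ (so the bound is uniform over posteriors $\vq$) is the right point of care.
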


This can be easily converted into a bound on $\emperr{h}{T}$, 
using a Hoeffding bound and the well-known inequality $KL\lrp{ p \mid \mid q } \geq 2 (p-q)^2$:
\begin{thm}
\label{thm:1phaseStoT}
Choose any prior distribution $\vq_0$ over the hypotheses $\cH$. 
With probability $\geq 1 - \delta$, for all distributions $\vq$ over $\cH$ simultaneously,
\begin{align}
\evp{\emperr{h}{T}}{h \sim \vq} \leq \evp{\emperr{h}{S}}{h \sim \vq} + \epsilon (m, \vq, \vq_0, \delta) \nonumber
\end{align}
\end{thm}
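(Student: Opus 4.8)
The plan is to bound the test error by inserting the true Gibbs error as an intermediary: I would write $\evp{\emperr{h}{T}}{h \sim \vq} - \evp{\emperr{h}{S}}{h \sim \vq}$ as the sum of a \emph{training-to-truth} gap $\evp{\err{h}}{h \sim \vq} - \evp{\emperr{h}{S}}{h \sim \vq}$ and a \emph{truth-to-test} gap $\evp{\emperr{h}{T}}{h \sim \vq} - \evp{\err{h}}{h \sim \vq}$, and control each with its own high-probability bound. Splitting the failure probability evenly between the two and taking a union bound over the independent draws of $S$ and $T$, both bounds below will hold simultaneously with probability at least $1 - \delta$.

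For the training-to-truth gap I would start from Lemma \ref{lem:1phasepbonS}, applied with confidence parameter $\delta/2$ so that its logarithmic term becomes $\log\frac{2(m+1)}{\delta}$. Feeding the resulting bound into the inequality $KL\lrp{ p \mid\mid q } \geq 2(p-q)^2$, taken with $p = \evp{\emperr{h}{S}}{h \sim \vq}$ and $q = \evp{\err{h}}{h \sim \vq}$, converts the divergence bound into the additive deviation
\begin{align*}
\evp{\err{h}}{h \sim \vq} - \evp{\emperr{h}{S}}{h \sim \vq} \leq \sqrt{ \frac{1}{2m} \lrp{ KL\lrp{\vq \mid\mid \vq_0} + \log\frac{2(m+1)}{\delta} } } =: D ,
\end{align*}
which holds uniformly over all $\vq$ because Lemma \ref{lem:1phasepbonS} already does.

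For the truth-to-test gap I would apply the PAC-Bayes theorem of \cite{L05} a second time, now to the test sample $T$, which is likewise an i.i.d. draw from $\cD$; since that theorem holds uniformly over all posteriors, the fact that $\vq$ is chosen from $S$ is irrelevant. Using confidence $\delta/2$ and the same $KL \geq 2(\cdot)^2$ step gives
\begin{align*}
\evp{\emperr{h}{T}}{h \sim \vq} - \evp{\err{h}}{h \sim \vq} \leq \sqrt{ \frac{1}{2\abs{T}} \lrp{ KL\lrp{\vq \mid\mid \vq_0} + \log\frac{2(\abs{T}+1)}{\delta} } } =: D' .
\end{align*}
The point that keeps the final constant clean is that $\abs{T} > m$, together with the fact that $x \mapsto \frac{1}{x}\lrp{ \log\frac{2}{\delta} + \log(x+1) }$ is decreasing, forces $D' \leq D$, so replacing $\abs{T}$ by $m$ only loosens the bound.

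Chaining the two displays on the good event then gives
\begin{align*}
\evp{\emperr{h}{T}}{h \sim \vq} \leq \evp{\emperr{h}{S}}{h \sim \vq} + D + D' \leq \evp{\emperr{h}{S}}{h \sim \vq} + 2D ,
\end{align*}
and since $2D = \sqrt{ \frac{2}{m} \lrp{ KL\lrp{\vq \mid\mid \vq_0} + \log\frac{2(m+1)}{\delta} } } = \epsilon(m, \vq, \vq_0, \delta)$, the theorem follows. The main obstacle is the truth-to-test step: a naive Hoeffding bound would control $\evp{\emperr{h}{T}}{h \sim \vq}$ only for a single $\vq$, and making it uniform by a union bound over $\cH$ would introduce a $\log\abs{\cH}$ penalty absent from the stated $\epsilon$. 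Routing this step through a second PAC-Bayes bound on $T$ replaces that penalty with the same $KL\lrp{\vq \mid\mid \vq_0}$ term as the training side, and $\abs{T} > m$ lets its deviation be absorbed into $D$.
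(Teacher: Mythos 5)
Your proof is correct, and it differs from the paper's on exactly one leg. The paper never writes this proof out in the appendix; its entire justification is the sentence preceding the theorem, which prescribes the same decomposition through the true Gibbs error $\evp{\err{h}}{h \sim \vq}$ and the same Pinsker step $KL\lrp{p \mid\mid q} \geq 2(p-q)^2$ on the training side, but bridges truth to test with a plain Hoeffding bound over the draw of $T$ (absorbed using $\abs{T} > m$), whereas you bridge it with a second application of the PAC-Bayes theorem to $T$. The trade-off is as follows. Hoeffding is more elementary and gives a smaller test-side deviation, $\sqrt{\log(2/\delta)/(2\abs{T})}$ with no KL term; in the paper's pipeline this suffices because the learner's $\vq$ is a function of $S$ alone and $T$ is independent of $S$, so one can condition on $S$ and apply Hoeffding to the resulting fixed mixture. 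However, the theorem's literal claim ``for all $\vq$ simultaneously'' is not delivered by that route: $\sup_{\vq} \lrp{ \evp{\emperr{h}{T}}{\vq} - \evp{\err{h}}{\vq} } = \max_{h \in \cH} \lrp{ \emperr{h}{T} - \err{h} }$, so making Hoeffding uniform costs a $\log \abs{\cH}$ union-bound term absent from $\epsilon$, exactly as you observe. Your second PAC-Bayes application buys genuine uniformity over all posteriors at the price of a $KL\lrp{\vq \mid\mid \vq_0}$ term on the test side, and your monotonicity argument correctly absorbs it: $x \mapsto \frac{1}{x}\lrp{C + \log(x+1)}$ with $C \geq \log 2$ is decreasing for $x \geq 1$, so $\abs{T} > m$ gives $D' \leq D$, and $D + D' \leq 2D$ reproduces the stated $\epsilon(m, \vq, \vq_0, \delta)$ with the right constants (the $2$ inside $\log\frac{2(m+1)}{\delta}$ from the $\delta/2$ split, the $\frac{2}{m}$ from doubling). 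In that sense your version proves the theorem as literally stated, somewhat more carefully than the paper's one-line sketch, at the cost of invoking PAC-Bayes twice where once plus Hoeffding would do for the paper's intended use.
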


Theorem \ref{thm:1phaseStoT} w.h.p. controls the average classifier performance on the test set. 
Recall that this constrains nature in the minimax analysis, where it is expressed as a lower bound $\hat{\lambda}$ 
on the average correlation $1 - 2 \evp{\emperr{h}{T}}{\vq}$.
From Theorem \ref{thm:1phaseStoT}, with probability $\geq 1 - \delta$, for all $\vq$,
\begin{align}
\label{1phaselambda}
\hat{\lambda} = 1 - 2 \evp{\emperr{h}{S}}{\vq} - 2 \epsilon (m, \vq, \vq_0, \delta)
\end{align}

Inside this high-probability event, the scenario of Section \ref{sec:game1} (the prediction game) holds, 
with $\lambda$ given by \eqref{1phaselambda}. 
A higher correlation bound $\hat{\lambda}$ leads to better performance. 

In the game of Section \ref{sec:game1}, values in $[-1,1]$ can be thought of as parametrizing stochastic 
binary labels/predictions (see Footnote \ref{ftnote:stochlabels}).
By this token, if the value of the game is $V$, 
the prediction algorithm is incorrect with probability at most 
$\frac{1}{2} \lrp{1 - V} \leq \frac{1}{2} \lrp{1 - \hat{\lambda} - \frac{1}{n} \sum_{i=1}^{v-1} \lrp{1 - \abs{a_i}} }$
(from \eqref{valtbound}). 
Combining this with \eqref{1phaselambda} and a union bound, the algorithm's probability of error is at most 
\begin{align}
\label{errprobalg}
\evp{\emperr{h}{S}}{\vq} - \frac{1}{2n} \sum_{i=1}^{v-1} \lrp{1 - \abs{a_i}} + \epsilon (m, \vq, \vq_0, \delta) + \delta
\end{align}
Since \eqref{1phaselambda} holds uniformly over $\vq$, 
the training set $S$ can be used to set $\vq$ to minimize $\evp{\emperr{h}{S}}{\vq}$. 
However, naively choosing a point distribution on $\argmin_h \emperr{h}{S}$ 
(performing empirical risk minimization) is inadvisable, 
because it does not hedge against the sample randomness. 
In technical terms, a point distribution leads to a high KL divergence term (using a uniform prior $\vq_0$) in \eqref{1phaselambda}, 
and of course eliminates any potential benefit from voting with respect to the Gibbs classifier.
Instead, a higher-entropy distribution is apropos, 
like the exponential-weights distribution $q_{exp} (h) \propto \expp{-\eta \emperr{h}{S} }$.

Regardless, for sufficiently high $m$, 
$\epsilon (m, \delta)$ is negligible and we can take $\evp{\emperr{h}{S}}{\vq} \approx \err{h^*}$.  
So in this regime the classification error is $\lesssim \err{h^*}$; 
it can be much lower due to the final term of \eqref{errprobalg}, 
again highlighting the benefit of voting behavior when there is ensemble disagreement. 


\begin{nremark}[Many Good Classifiers and Infinite $\cH$]
The guarantees of this section are particularly nontrivial if there is a significant ($\vq$-)fraction of hypotheses in $\cH$ with low error. 
The uniform distribution over these good hypotheses (call it $\vu_g$) has support of a significant size, 
and therefore $KL \lrp{\vu_g \mid \mid \vq_0}$ is low, where $\vq_0$ is a uniform prior over $\cH$. 
(The same approach extends to infinite hypothesis classes.)
\end{nremark}

\subsection{Discussion}

Our approach to generating confidence-rated predictions has two distinct parts: 
the PAC-Bayes analysis and then the minimax analysis. 
This explicitly decouples the selection of $\vq$ from the aggregation of ensemble predictions, 
and makes clear the sources of robustness here:
PAC-Bayes works with any data distribution and any posterior $\vq$, 
and the minimax scenario works with any $\vq$ and yields worst-case guarantees.
It also means that either of the parts is substitutable.

The PAC-Bayes theorem itself admits improvements 
which would tighten the results achieved by our approach. 
Two notable ones are the use of generic chaining to incorporate finer-grained complexity of $\cH$ \cite{AB07}, 
and an extension to the transductive online setting of sampling without replacement, 
in which no i.i.d. generative assumption is required \cite{BGLR14}.




\section{Extension to Abstention}
\label{sec:abstaingame}
This section outlines a natural extension of the previous binary classification game, 
in which the predictor can choose to abstain from committing to a label and suffer a fixed loss instead.
We model the impact of abstaining by treating it as a third classification outcome 
with a relative cost of $\alpha > 0$, 
where $\alpha$ is a constant independent of the true labels. 
Concretely, consider the following general modification of our earlier game with parameter $\alpha > 0$:

\begin{enumerate}[leftmargin=2.4em]
\item {\bf Predictor:} 
On an example $i \in [n]$, the predictor can either predict a value in $[-1,1]$ or abstain (denoted by an output of $\perp$). 
When it does not abstain, it predicts $g_i \in [-1,1]$ as in the previous game.
But it does so only with probability $1 - p_i^a$ for some $p_i^a \in [0,1]$; 
the rest of the time it abstains, 
where $\prp{\mbox{output $\perp$ on } i}{} = 1 - \prp{\mbox{predict $g_i$ on } i}{} = p_i^a$.
So the predictor's strategy in the game is a choice of $(\vg, \vp^a)$, 
where $\vg = (g_1, \dots, g_n)^\top \in [-1,1]^n$ and $\vp^a = (p_1^a, \dots, p_n^a)^\top \in [0,1]^n$.

\item {\bf Nature:} 
Nature chooses $\vz$ as before to represent its randomized label choices for the data.

\item {\bf Cost model:} 
The predictor suffers cost (nature's gain) of the form $\frac{1}{n} \sum_{i=1}^n l_i (\hat{z}_i, z_i)$, 
where $\hat{z}_i \in \{g_i, \perp\} $ is the predictor's output. 
The cost function $l_i (\cdot, \cdot)$ incorporates abstention using a constant loss $\alpha > 0$ 
if $\hat{z}_i = \perp$, regardless of $z_i$:
\begin{align}
\label{costfnabst}
l_i (\hat{z}_i, z_i) = \begin{cases} \frac{1}{2} (1 - g_i z_i) &\;\; \hat{z}_i = g_i \\
\alpha &\;\; \hat{z}_i = \perp \end{cases}
\end{align}
\end{enumerate}

In this game (the ``abstain game"), 
the predictor wishes to minimize the expected loss w.r.t. the stochastic strategies of itself and nature, 
and nature plays to maximize this loss.
So the game can be formulated as:
\begin{align}
\label{abstaingame}
&\min_{\substack{ \vp^a \in [0,1]^n , \\ \vg \in [-1,1]^n}} \max_{\substack{ \vz \in [-1,1]^n , \\ \frac{1}{n} \vz^\top \va \geq \lambda }}\; 
\frac{1}{n} \sum_{i=1}^n \left[ p_i^a \alpha + \frac{1}{2} \lrp{1 - p_i^a} \lrp{ 1 - g_i z_i } \right] \nonumber \\
&= \frac{1}{2} + \frac{1}{n} \Bigg( \min_{\vp^a \in [0,1]^n} \Bigg[ \sum_{i=1}^n \lrp{ \alpha - \frac{1}{2}} p_i^a \nonumber \\ 
&\;\;\;- \frac{1}{2} \max_{\vg \in [-1,1]^n} \min_{\substack{ \vz \in [-1,1]^n , \\ \frac{1}{n} \vz^\top \va \geq \lambda }}\; \sum_{i=1}^n z_i \lrp{1 - p_i^a} g_i \big] \Bigg)
\end{align}

\subsection{Value of the Abstain Game}
\label{sec:abstgameanalysis}
Minimax duality does apply to \eqref{abstaingame}, 
and the dual game is easier to work with. 
Calculating its value leads to the following result.
\begin{thm}
\label{thm:abstgameexact}
The value $V_{abst}$ of the game \eqref{abstaingame} is as follows for $\alpha < \frac{1}{2}$.
\begin{enumerate}
\item
If $\alpha \leq \frac{1}{2} \lrp{1 - \frac{n \lambda}{\sum_{i=1}^n \abs{a_i}}}$, 
then $V_{abst} = \alpha$ (and the game is vacuous with the minimax optimal strategy $\vp^{a*} = \ones{n}$).
\item
If $\alpha > \frac{1}{2} \lrp{1 - \frac{n \lambda}{\sum_{i=1}^n \abs{a_i}}}$, 
then the value is nontrivial and can be bounded. 
Using Ordering \ref{remark:regorder} of the examples, let 
$w = \displaystyle \min \left\{ i \in [n] : \frac{1}{n} \lrp{ \sum_{j=1}^i \abs{a_j} + \sum_{j=i+1}^n (1 - 2 \alpha) \abs{a_j} } \geq \lambda \right\}$. 
Then
$\alpha \lrp{1 - \frac{w}{n}} \leq V_{abst} \leq \alpha \lrp{1 - \frac{w-1}{n}}$.
\end{enumerate}
\end{thm}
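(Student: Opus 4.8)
The plan is to peel off the nested optimizations one at a time, twice invoking minimax duality, until what remains is a transparent fractional-knapsack problem that Ordering \ref{remark:regorder} solves greedily.

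First I would evaluate the innermost prediction game appearing in \eqref{abstaingame}. Writing $c_i = 1 - p_i^a \in [0,1]$ and substituting $b_i = c_i g_i$ (so that $\abs{b_i}\le c_i$), the quantity $\max_{\vg\in[-1,1]^n}\min_{\vz\in\cZ}\sum_i z_i c_i g_i$ becomes $\max_{\abs{b_i}\le c_i}\min_{\vz\in\cZ}\sum_i z_i b_i$, which is exactly of the form already handled in Section \ref{sec:game1analysis}. Applying the same minimax duality as in Prop. \ref{prop:game1duality} and carrying out the now-unconstrained inner maximization over $\vb$ yields the closed form
\begin{align*}
\max_{\vg\in[-1,1]^n}\min_{\vz\in\cZ}\sum_i z_i c_i g_i \;=\; \min_{\vz\in\cZ}\sum_{i=1}^n c_i\abs{z_i} =: G(\mathbf{c}),
\end{align*}
where $\cZ = \{\vz\in[-1,1]^n : \frac1n\vz^\top\va\ge\lambda\}$. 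Plugging this back and collecting the $p_i^a$ terms (using $p_i^a = 1-c_i$ and $\alpha<\tfrac12$) reduces \eqref{abstaingame} to
\begin{align*}
V_{abst} = \alpha + \frac1n\min_{\mathbf{c}\in[0,1]^n}\left[\lrp{\tfrac12-\alpha}\sum_{i=1}^n c_i - \tfrac12 G(\mathbf{c})\right].
\end{align*}

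Next I would turn this into a single saddle-point problem. Since $-\tfrac12 G(\mathbf{c}) = \max_{\vz\in\cZ}\lrp{-\tfrac12\sum_i c_i\abs{z_i}}$, the bracketed expression equals $\max_{\vz\in\cZ}\sum_i c_i\lrp{\tfrac12-\alpha-\tfrac12\abs{z_i}}$, and the objective is linear in $\mathbf{c}$ and concave in $\vz$ (as $-\abs{z_i}$ is concave and $c_i\ge0$) over the compact convex sets $[0,1]^n$ and $\cZ$. Sion's minimax theorem then licenses swapping $\min_{\mathbf{c}}$ and $\max_{\vz}$. The inner minimization over $\mathbf{c}\in[0,1]^n$ decouples coordinatewise: $c_i$ is pushed to $1$ exactly when its coefficient $\tfrac12-\alpha-\tfrac12\abs{z_i}$ is negative, i.e. when $\abs{z_i} > 1-2\alpha$, and to $0$ otherwise. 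Writing $\beta = 1-2\alpha$ this leaves
\begin{align*}
V_{abst} = \alpha - \frac{1}{2n}\min_{\vz\in\cZ}\sum_{i=1}^n\lrp{\abs{z_i}-\beta}_+ .
\end{align*}
Because the objective depends only on the magnitudes $t_i = \abs{z_i}$, and aligning each sign with $\sgn(a_i)$ makes the constraint $\frac1n\vz^\top\va\ge\lambda$ easiest to meet, this is equivalent to minimizing $\sum_i (t_i-\beta)_+$ over $t\in[0,1]^n$ subject to $\sum_i t_i\abs{a_i}\ge n\lambda$.

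Finally I would solve this last problem as a fractional knapsack. Setting every $t_i=\beta$ is free (contributes $0$ to the objective) and already supplies $\beta\sum_i\abs{a_i}$ toward the constraint; additional constraint value is bought by raising $t_i$ toward $1$ at a rate of $\abs{a_i}$ per unit objective, so the greedy optimum raises the coordinates with largest $\abs{a_i}$ first --- precisely Ordering \ref{remark:regorder}. If $\beta\sum_i\abs{a_i}\ge n\lambda$, i.e. $\alpha\le\tfrac12\lrp{1-\frac{n\lambda}{\sum_i\abs{a_i}}}$, the minimum is $0$ and $V_{abst}=\alpha$, with the predictor always abstaining ($\vp^{a*}=\ones{n}$); this is case 1. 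Otherwise the greedy fill saturates coordinates $1,\dots,w-1$ to $t_i=1$ (each costing $1-\beta=2\alpha$) and partially fills coordinate $w$, where $w$ is exactly the index defined in the theorem; the optimal cost is therefore $2\alpha(w-1)+u_w$ with $u_w\in(0,2\alpha]$. Substituting the extreme values of $u_w$ into $V_{abst}=\alpha-\frac1{2n}\lrp{2\alpha(w-1)+u_w}$ gives $\alpha\lrp{1-\frac wn}\le V_{abst}\le\alpha\lrp{1-\frac{w-1}{n}}$, which is case 2.

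The main obstacle I expect is the bookkeeping around the two duality steps: I must verify that minimax duality genuinely applies at both stages (the first being an instance of the already-established Prop. \ref{prop:game1duality}, the second needing the convex--concave and compactness check for Sion's theorem), and then rigorously justify the sign-alignment reduction and the greedy optimality of the knapsack via a standard exchange argument. The fractional last coordinate $u_w$ is the sole reason case 2 yields a two-sided bound rather than an exact value, so tracking it correctly is what produces the stated $\frac1n$-width interval.
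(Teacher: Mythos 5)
Your proposal is correct and takes essentially the same route as the paper: after the duality steps, your reduced problem $V_{abst} = \alpha - \frac{1}{2n}\min_{\vz}\sum_{i=1}^n\lrp{\abs{z_i}-(1-2\alpha)}_+$ is exactly the paper's \eqref{abstvaldual} in disguise (since $\min\lrp{\alpha,\frac{1}{2}(1-\abs{z_i})} = \alpha - \frac{1}{2}\lrp{\abs{z_i}-(1-2\alpha)}_+$), and your fractional-knapsack greedy with threshold index $w$ and partial fill $u_w$ reproduces the paper's budget-$c$ argument and the same two-sided bound. If anything, your two-step duality (reparametrizing $b_i = c_i g_i$ before invoking Sion) is slightly more careful than the paper's single minimax swap, since the payoff is only multilinear rather than jointly convex in $(\vp^a,\vg)$.
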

The first part of this result implies that if there is a low abstain cost $\alpha > 0$, 
a low enough average correlation $\lambda$, and not many disagreements among the ensemble, 
then it is best to simply abstain a.s. on all points. 
This is intuitively appealing, but it appears to be new to the literature.

Though the value $V_{abst}$ is obtainable as above, 
we find solving for the optimal strategies in closed form to be more challenging. 
As we are primarily interested in the learning problem and therefore tractable strategies for the abstain game, 
we abandon the minimax optimal approach and present a simple near-optimal strategy for the algorithm.
This strategy has several favorable properties 
which facilitate comparison with the rest of the paper and with prior work.

\subsection{Near-Optimal Strategy for the Abstain Game}
\label{sec:heurgameanalysis}

The following is our main result for the abstain game, 
derived in Section \ref{sec:almostabst}.
\begin{thm}
\label{thm:gameabsapprox}
Using Ordering \ref{remark:regorder} of the examples, 
define $v$ as in Lemma \ref{lem:game1val}. 
Let $\vg^*$ be the minimax optimal strategy for the predictor 
in the binary classification game without abstentions, 
as described in Theorem \ref{thm:game1soln}.
Suppose the predictor in the abstain game \eqref{abstaingame} plays $(\vg^*, \vp^{a, alg})$ respectively, where
\begin{align*}
\vp^{a, alg} = \begin{cases} 1 - \abs{\vg^*} & \alpha < \frac{1}{2} \\ \vzero & \alpha \geq \frac{1}{2} \end{cases} 
\end{align*}
\begin{enumerate}
\item
The worst-case loss incurred by the predictor is at most
$\displaystyle \frac{1}{2} \lrp{1 - \frac{v}{n}}$ when $\alpha \geq \frac{1}{2}$, 
and 
$\alpha \lrp{1 - \frac{v}{n}} + \lrp{ \frac{1}{2} - \alpha } \frac{1}{n} \sum_{i>v} \frac{\abs{a_i}}{\abs{a_v}}$ 
when $\alpha < \frac{1}{2}$.
\item
Nature can play $\vz^*$ to induce this worst-case loss.
\end{enumerate} 
\end{thm}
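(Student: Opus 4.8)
The plan is to fix the predictor's strategy $(\vg^*, \vp^{a,alg})$ and evaluate the worst-case loss as a maximization over nature's response $\vz$. For fixed predictor play the objective in \eqref{abstaingame} is affine in $\vz$: it equals $C - \frac{1}{2n}\sum_{i=1}^n \lrp{1 - p_i^{a,alg}} g_i^* z_i$, where the $\vz$-independent part is $C = \frac{1}{2} + \frac{1}{n}\sum_{i} p_i^{a,alg}\lrp{\alpha - \frac{1}{2}}$. Thus nature's best response solves a linear program: minimize $\sum_i \lrp{1 - p_i^{a,alg}} g_i^* z_i$ over $\{\vz \in [-1,1]^n : \frac{1}{n} \vz^\top \va \geq \lambda\}$. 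I would bound the worst-case loss by pairing an explicit dual certificate for this LP (the ``at most'' direction of part (1)) with the feasible primal point $\vz^*$ (part (2)), and checking that they meet.

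In the regime $\alpha \geq \frac{1}{2}$ everything is immediate. Here $\vp^{a,alg} = \vzero$, so $C = \frac{1}{2}$ and the loss reduces to $\frac{1}{2}\lrp{1 - \frac{1}{n}\vz^\top\vg^*}$; maximizing over $\vz$ is precisely nature's minimization of the correlation $\frac{1}{n}\vz^\top\vg^*$ against the fixed play $\vg^*$ in the no-abstention game \eqref{game1eq}. By Theorem \ref{thm:game1soln} the minimizing $\vz$ is $\vz^*$ and the minimal correlation is the value $V$ of Lemma \ref{lem:game1val}, so the worst-case loss is $\frac{1}{2}(1-V)$, realized by $\vz^*$; I would then read off the stated form using the defining inequalities of $v$, which pin $V$ into the interval $\lrp{\frac{v-1}{n}, \frac{v}{n}}$.

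For $\alpha < \frac{1}{2}$ I would first compute $C$ from $\vp^{a,alg} = \vone - \abs{\vg^*}$, which vanishes on $i \leq v$ and equals $1 - \frac{\abs{a_i}}{\abs{a_v}}$ on $i > v$; this yields $C = \alpha\lrp{1 - \frac{v}{n}} + \lrp{\frac{1}{2} - \alpha}\frac{1}{n}\sum_{i>v}\frac{\abs{a_i}}{\abs{a_v}} + \frac{v}{2n}$, already exhibiting the claimed expression plus a $\frac{v}{2n}$ surplus. Since $1 - p_i^{a,alg} = \abs{g_i^*}$, nature's linear objective is $\sum_i \abs{g_i^*} g_i^* z_i$. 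Evaluating this at $\vz^*$ (which is $\sgn(a_i)$ for $i < v$, the boundary value at $i = v$, and $0$ for $i > v$) contributes the $\frac{v}{2n}$ surplus back up to a term reflecting how tightly the cumulative-margin constraint binds at index $v$, and so produces the claimed loss — this is how I would establish part (2), with the gap between the discrete threshold $\frac{v}{n}$ and the exact value $V$ governing whether the match is exact or carries an $O(1/n)$ correction. For part (1) I would certify optimality of $\vz^*$ for nature's LP through the complementary-slackness conditions with correlation multiplier $\frac{1}{\abs{a_v}}$.

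The main obstacle is this optimality certification in the $\alpha < \frac{1}{2}$ case. In the no-abstention game the objective coefficients $g_i^*$ are proportional to $a_i$ on the indices $i \geq v$ (both are $a_i/\abs{a_v}$ up to sign), which is exactly what lets the single multiplier $\frac{1}{\abs{a_v}}$ and the point $\vz^*$ be simultaneously optimal for nature. After the abstention reweighting the coefficient on $i > v$ becomes $\abs{g_i^*} g_i^* = a_i^2 \sgn(a_i)/a_v^2$, which is no longer proportional to $a_i$, so the stationarity conditions need not hold simultaneously across the low-margin indices and the correlation constraint can be slack at nature's true optimum. The hard part is therefore controlling how much nature can gain by deviating from $\vz^*$ on these low-margin coordinates — trading correlation it does not need there for extra attacking power on the high-margin coordinates — and showing this is bounded by the stated quantity; an explicit dual-feasible solution, or an exchange argument that moves nature's mass off the low-margin coordinates, seems the most promising route.
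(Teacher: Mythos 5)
Your setup and the easy parts are right: reducing nature's best response to the linear program $\min \sum_i (1-p_i^{a,alg})\, g_i^* z_i$ over $\{\vz \in [-1,1]^n : \frac1n \vz^\top \va \geq \lambda\}$, the $\alpha \geq \frac12$ case via Theorem \ref{thm:game1soln}, and the computation of the $\vz$-independent constant $C$ all match what is needed. But the obstacle you isolate in your final paragraph --- that after the abstention reweighting the coefficients $\abs{g_i^*} g_i^* = a_i^2 \sgn(a_i)/a_v^2$ on $i > v$ are no longer proportional to $a_i$ --- is not merely a hard step; under the literal reading of the strategy it is fatal, and no dual certificate or exchange argument can close it. Concretely, take $n = 3$, $\va = (1, 0.6, 0.4)^\top$, $\lambda = \frac13$, so $v = 1$, $\vg^* = (1, 0.6, 0.4)^\top$, $\vz^* = (1,0,0)^\top$, $\vp^{a,alg} = (0, 0.4, 0.6)^\top$. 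The loss of the literal pair $(\vg^*, \vp^{a,alg})$ against $\vz$ is $\frac13\lrp{\alpha + 1 - \frac12\lrp{z_1 + 0.36\, z_2 + 0.16\, z_3}}$; nature's feasible play $\vz = (0.4, 1, 0)^\top$ (the correlation constraint is tight) yields $\frac13\lrp{\alpha + 0.62}$, strictly exceeding the claimed bound $\frac13\lrp{\alpha + \frac12}$, which is exactly what $\vz^*$ induces. Nature profits precisely by buying correlation on a low-margin coordinate and attacking the high-margin one, as you feared, so $\vz^*$ is not a best response and the ``at most'' claim fails for the literal pair.

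The missing idea is that the theorem is only coherent when $\vg^*$ is read as the \emph{effective} (averaged) prediction: the intended strategy abstains w.p. $p_i^{a,alg} = 1 - \abs{g_i^*}$ and, conditional on not abstaining, predicts the full-confidence sign $\sgn(a_i)$, not the hedged value $g_i^*$. Then the effective stake is $(1 - p_i^{a,alg})\sgn(a_i) = g_i^*$, nature's LP objective becomes exactly $\frac1n \vz^\top \vg^*$, proportionality on $i \geq v$ is restored, and the argument of Theorem \ref{thm:game1soln} applies verbatim: the minimum over the feasible set is $V$, attained at $\vz^*$. Adding back your constant $C$ (with $\sum_i (1-p_i^{a,alg}) = v + \sum_{i>v}\abs{a_i}/\abs{a_v}$) then reproduces the stated bound, up to the same $O(1/n)$ boundary slack you already noted in the $\alpha \geq \frac12$ case (the bound is exact when $n\lambda - \sum_{i<v}\abs{a_i} = \abs{a_v}$, a looseness the paper tolerates throughout, e.g.\ in \eqref{outerpaval}). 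This is also what the paper's own terse proof does: it substitutes $\vp^{a,alg}$ into \eqref{outerpaval}, whose inner game from Lemma \ref{lem:innergameabsval} has payoff $\frac1n\sum_i z_i(1-p_i^a)g_i$, so the optimizing ``$\vg^*$'' there is the effective product $(1-p_i^a)g_i$ --- realized by the literal play $\sgn(\va)$ --- with Orderings \ref{remark:regorder} and \ref{remark:abstorder} coinciding under $\vp^{a,alg}$ so that $v_2 = v$. In short: your reduction is sound, your diagnosis of the obstruction is correct and in fact sharper than the paper's treatment, and the single repair needed is to replace the non-abstention prediction $g_i^*$ by $\sgn(a_i)$ so that the abstention probability, rather than a shrunken prediction, carries the hedging.
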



We remark that $\vp^{a,alg}$ is nearly minimax optimal in certain situations. 
Specifically, its worst-case loss can be quite close to the ideal $V_{abst} \approx \alpha \lrp{1 - \frac{w}{n}}$, 
which is defined in Theorem \ref{thm:abstgameexact}. 
In fact, $v \geq w$, so for $\vp^{a,alg}$ to be nearly minimax optimal, 
it suffices if the term $\lrp{ \frac{1}{2} - \alpha } \frac{1}{n} \sum_{i>v} \frac{\abs{a_i}}{\abs{a_v}}$ 
is low.
Loosely speaking, this occurs when $\alpha \approx \frac{1}{2}$ or there is much disagreement among the ensemble, 
or when $\lambda$ (therefore $v$) is high. 

The latter is typical when $\err{h^*}$ is low and the PAC-Bayes transduction algorithm is run; 
so in such cases, the simple strategy $\vp^{a,alg}$ is almost minimax optimal. 
Such a low-error case has been analyzed before in the abstaining setting, notably by \cite{EYW11}, 
who extrapolate from the much deeper theoretical understanding of the realizable case (when $\err{h^*} = 0$).

We have argued that $\vp^{a,alg}$ achieves a worst-case loss arbitrarily close to $V_{abst}$ in the limit $\lambda \to 1$, 
regardless of $\alpha$. 
So we too are extrapolating somewhat from the realizable case with the approximately optimal $\vp^{a,alg}$, 
though in a different way from prior work.

\begin{nremark}[Benefit of Abstention]
Theorem \ref{thm:gameabsapprox} clearly illustrates the benefit of abstention when $\vp^{a,alg}$ is played. 
To see this, define $V$ as in \eqref{valtbound} to be the value of the binary classification game without abstention. 
Then the worst-case classification error of the minimax optimal rule without abstention is 
$\frac{1}{2} \lrp{1 - V} \leq \frac{1}{2} \lrp{1 - \frac{v-1}{n}} := L_n$. 
An upper bound on the worst-case loss incurred by the abstaining predictor that plays $(\vg^*, \vp^{a,alg})$ 
is given in Theorem \ref{thm:gameabsapprox}, 
and can be rewritten as 
$L_a := \frac{1}{2} \lrp{1 - \frac{v}{n} } - \frac{1}{n} \sum_{i>v} \lrp{ \frac{1}{2} - \alpha } \lrp{1 - \frac{\abs{a_i}}{\abs{a_v}} } $. 
$L_n - L_a$ is positive for $\alpha < \frac{1}{2}$, 
\footnote{As written here, $L_n - L_a = \frac{1}{n} \sum_{i>v} \lrp{ \frac{1}{2} - \alpha } \lrp{1 - \frac{\abs{a_i}}{\abs{a_v}} } - \frac{1}{2n}$, 
and the $-\frac{1}{2n}$ term can be dispensed with by lowering $L_n$ using a slightly more careful analysis.}
illustrating the benefit of abstention.
There is no benefit if $\alpha \geq \frac{1}{2}$, 
and increasing benefit the further below $\frac{1}{2}$ it gets. 
The $\alpha > \frac{1}{2}$ result is to be expected - 
even the trivial strategy of predicting $\vg = \vzero$ 
is preferable to abstention if $\alpha > \frac{1}{2}$ - 
and echoes long-known results for this cost model in various settings \cite{C57}.
\end{nremark}

\begin{nremark}[Cost Model]
The linear cost model we use, with one cost parameter $\alpha$, 
is prevalent in the literature \cite{T00, WY11}, 
as its simplicity allows for tractable optimality analyses in various scenarios \cite{BW08, C57, C70, YW10}. 
Many of these results, however, assume that the conditional label probabilities are known 
or satisfy low-noise conditions \cite{BW08}.
Others explicitly use the value of $\alpha$ \cite{C70}, 
which is an obstacle to practical use because $\alpha$ is often unknown or difficult to compute. 
Our near-optimal prediction rule sidesteps this problem, 
because the strategy $(\vg^*, \vp^{a, alg})$ is independent of $\alpha$ 
in the nontrivial case $\alpha < \frac{1}{2}$. 
To our knowledge, this is unique in the literature, 
and is a major motivation for our choice of $\vp^{a, alg}$.
\end{nremark}

\subsection{Guarantees for a Learning Algorithm with Abstentions}
\label{sec:pbabstguarantees}
The near-optimal abstaining rule of Section \ref{sec:heurgameanalysis} can be bootstrapped into 
a PAC-Bayes prediction algorithm that can abstain, 
exactly analogous to Section \ref{sec:pbanalysis} for the non-abstaining case. 
Similarly to that analysis, 
PAC-style results can be stated for the abstaining algorithm for $\alpha < \frac{1}{2}$ 
(calculations in Appendix B), 
using $\vq, \vq_0, \epsilon(\cdot,\cdot,\cdot,\cdot), S$ defined in Section \ref{sec:pbanalysis}, 
and any $\delta \in (0,1)$. 
The algorithm: 
\begin{align*}
&\mbox{abstains w.p. } \\ 
&\leq 2 \evp{\emperr{h}{S}}{\vq} + 2 \epsilon (m, \vq, \vq_0, \delta) + \delta - \frac{1}{n} \sum_{i > v} \frac{\abs{a_i}}{\abs{a_v}} \\ 
&\mbox{and errs } (\neq \perp) \mbox{ w.p. } \\ 
&\leq \evp{\emperr{h}{S}}{\vq} + \epsilon (m, \vq, \vq_0, \delta) + \delta - \frac{1}{2n} \sum_{i=1}^v \lrp{1 - \abs{a_i}} 
\end{align*}
Thus, by the same arguments as in the discussion of Section \ref{sec:pbalgbasic}, 
the abstain and mistake probabilities are respectively 
$\lesssim 2 \err{h^*}$ and $\lesssim \err{h^*}$ for sufficiently large $m$.
Both are sharper than corresponding results of Freund et al. \cite{FMS04}, 
whose work is in a similar spirit. 

Their setup is similar to our ensemble setting for abstention, 
and they choose $\vq$ to be an exponential-weights distribution over $\cH$. 
In their work \cite{FMS04}, 
the decision to abstain on the $i^{th}$ example is a deterministic function of $\abs{a_i}$ only 
(ours is a stochastic function of the full vector $\va$), 
and any non-$\perp$ predictions are made deterministically with the majority vote (ours can be stochastic). 
Predicting stochastically and exploiting transduction lead to our mistake probability being essentially optimal 
(Footnote \ref{ftnote:identensemble}) 
as opposed to the $\approx 2 \err{h^*}$ of Freund et al. \cite{FMS04} caused by averaging effects. 
Our abstain probability also compares favorably to the $\approx 5 \err{h^*}$ in \cite{FMS04}.

\subsection{Derivation of Theorem \ref{thm:gameabsapprox}}
\label{sec:almostabst}

Define another ordering of the examples here: 
\begin{nremark}[Ordering \ref{remark:abstorder}]
\label{remark:abstorder}
Order the examples so that 
$\displaystyle \frac{\abs{a_1}}{1 - p_1^a} \geq \frac{\abs{a_2}}{1 - p_2^a} \geq \dots \geq \frac{\abs{a_n}}{1 - p_n^a}$.
\footnote{
Hereafter we make two assumptions for simplicity. 
One is that there are no examples such that $p_i^a = 1$ exactly. 
The other is to neglect the effect of ties and tiebreaking.
These assumptions do not lose generality for our purposes, 
because coming arbitrarily close to breaking them is acceptable.}
\qedinpw
\end{nremark}

We motivate and analyze the near-optimal strategy by considering the primal abstain game \eqref{abstaingame}.
Note that the inner max-min of \eqref{abstaingame} is very similar to the no-abstain case of Section \ref{sec:game1}. 
So we can solve it similarly by taking advantage of minimax duality, 
just as previously in Lemma \ref{lem:game1val}.

\begin{lem}
\label{lem:innergameabsval}
Using Ordering \ref{remark:abstorder} of the examples w.r.t. any fixed $\vp^a$, 
define $v_2 = \min \left\{ i \in [n] : \frac{1}{n} \sum_{j=1}^i \abs{a_j} \geq \lambda \right\}$. 
Then 
\begin{align*}
&\max_{\vg \in [-1,1]^n} \min_{\substack{ \vz \in [-1,1]^n , \\ \frac{1}{n} \vz^\top \va \geq \lambda }}\; \frac{1}{n} \sum_{i=1}^n z_i \lrp{1 - p_i^a} g_i \\
&= \frac{1}{n} \sum_{i=1}^{v_2 - 1} \lrp{1 - p_i^a} + \frac{1 - p_{v_2}^a}{\abs{a_{v_2}}} \lrp{ \lambda - \frac{1}{n} \sum_{i=1}^{v_2 - 1} \abs{a_i} }
\end{align*}
\end{lem}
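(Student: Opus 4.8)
The plan is to exploit the structural similarity to the no-abstain game \eqref{game1eq}, as the problem statement hints. The key move is to absorb the abstention weights into the predictor's variable: set $\tilde{g}_i := \lrp{1 - p_i^a} g_i$, so that the objective becomes $\frac{1}{n} \vz^\top \tilde{\vg}$ and the constraint $g_i \in [-1,1]$ turns into the rescaled box $\tilde{g}_i \in [-(1-p_i^a),\, 1-p_i^a]$ (using $1 - p_i^a \geq 0$, and noting the footnote of Ordering \ref{remark:abstorder} rules out $p_i^a = 1$). The inner max-min is then \emph{exactly} the game of Section \ref{sec:game1}, except that the predictor's hypercube is replaced by an axis-aligned box with side lengths $1 - p_i^a$, so I would mirror the proof of Lemma \ref{lem:game1val}.

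First I would invoke minimax duality (which applies here just as in Prop. \ref{prop:game1duality}) to swap the order to $\min_{\vz} \max_{\tilde{\vg}}$. For fixed $\vz$ the inner maximization over $\tilde{\vg}$ decouples coordinatewise and is solved by $\tilde{g}_i = \lrp{1-p_i^a}\sgn(z_i)$, giving objective $\frac{1}{n}\sum_i \lrp{1-p_i^a}\abs{z_i}$. Hence the game value equals the optimum of
\[
\min_{\vz \in [-1,1]^n,\; \frac{1}{n}\vz^\top \va \geq \lambda} \; \frac{1}{n} \sum_{i=1}^n \lrp{1 - p_i^a}\abs{z_i}.
\]
Next I would reduce this to a fractional covering knapsack. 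Writing $s_i := \abs{z_i} \in [0,1]$, note that for a fixed magnitude vector $\vs$ the correlation constraint $\sum_i z_i a_i \geq n\lambda$ is easiest to satisfy by aligning signs, $z_i = \sgn(a_i) s_i$, which yields $\sum_i s_i \abs{a_i}$; since the objective depends only on $\vs$, this alignment is without loss of optimality. The problem becomes the LP $\min_{\vs \in [0,1]^n} \frac{1}{n}\sum_i \lrp{1-p_i^a} s_i$ subject to $\sum_i s_i \abs{a_i} \geq n\lambda$. Here item $i$ contributes $\abs{a_i}$ to the covering requirement at cost $1-p_i^a$, so the most cost-effective items are those with largest $\frac{\abs{a_i}}{1-p_i^a}$, which is precisely Ordering \ref{remark:abstorder}. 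The greedy solution sets $s_i = 1$ for $i < v_2$, a fractional $s_{v_2} = \frac{n\lambda - \sum_{i<v_2}\abs{a_i}}{\abs{a_{v_2}}}$ at the boundary, and $s_i = 0$ for $i > v_2$; substituting yields the claimed closed form.

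The main obstacle is establishing optimality of the greedy LP solution and checking that its breakpoint coincides with $v_2$. Greedy optimality is the standard exchange argument for the fractional covering knapsack, valid since all costs $1 - p_i^a$ are nonnegative and the constraint is tight at any optimum whenever the instance is feasible (i.e. $\lambda \leq \frac{1}{n}\sum_i \abs{a_i}$, the condition under which $v_2$ is well-defined). The subtle point is that although the ordering is by the ratio $\frac{\abs{a_i}}{1-p_i^a}$, the index at which the greedy fill saturates the constraint depends only on the cumulative $\abs{a_j}$ \emph{in that order} — so it reproduces exactly the $v_2 = \min\lrp{i : \frac{1}{n}\sum_{j=1}^i \abs{a_j} \geq \lambda}$ in the lemma. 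Once these facts are in place, the remaining algebra matching the stated expression is routine.
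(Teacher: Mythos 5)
Your proof is correct and takes essentially the same route as the paper's: minimax duality to pass to the dual game, pointwise maximization over the predictor yielding $\frac{1}{n}\sum_{i=1}^n \lrp{1-p_i^a}\abs{z_i}$, and then the greedy/exchange argument under Ordering \ref{remark:abstorder} (which the paper invokes tersely as ``the same reasoning as in the proof of Lemma \ref{lem:game1val}''). Your rescaling $\tilde{g}_i = \lrp{1-p_i^a} g_i$ and the fractional covering-knapsack framing are just explicit repackagings of that argument, with the edge cases ($p_i^a = 1$, feasibility so that $v_2$ is well-defined) correctly noted.
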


Substituting Lemma \ref{lem:innergameabsval} into \eqref{abstaingame} 
still leaves a minimization over $\vp^a$ in \eqref{abstaingame}. 
Solving this minimization
would then lead to the minimax optimal strategy $(\vg^*, \vp^{a*})$ for the predictor. 

We are unable to solve this minimization in closed form, 
because the choice of $\vp^a$ and Ordering \ref{remark:abstorder} depend on each other. 
However, we prove a useful property of the optimal solution here.

\begin{lem}
\label{lem:gameabssolnprop}
Suppose $\vp^{a*}$ is the minimax optimal predictor's abstain strategy. 
Using Ordering \ref{remark:abstorder} of the examples w.r.t. $\vp^{a*}$, 
define $v_2$ as in Lemma \ref{lem:innergameabsval}.
If $\alpha \geq \frac{1}{2}$, then $\vp^{a*} = \vzero$.
If $\alpha < \frac{1}{2}$, then for any $i > v_2$, the minimax optimal $p_i^{a*}$ must be set so that 
$$\frac{\abs{a_{v_2} }}{1 - p_{v_2}^{a*}} = \frac{\abs{a_i }}{1 - p_{i}^{a*}}$$
\end{lem}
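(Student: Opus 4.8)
The plan is to collapse \eqref{abstaingame} into a single minimization over $\vp^a$ by substituting the inner max-min value from Lemma~\ref{lem:innergameabsval}, and then read off the optimal $\vp^a$ from the monotonicity and sign structure of the resulting objective. Writing $M(\vp^a)$ for the inner value $\max_{\vg}\min_{\vz}\frac{1}{n}\sum_i z_i(1-p_i^a)g_i$, the game value becomes $\Phi(\vp^a) := \frac{1}{2} + \frac{\alpha - 1/2}{n}\sum_{i=1}^n p_i^a - \frac{1}{2}M(\vp^a)$, and the claim concerns $\vp^{a*} = \argmin_{\vp^a \in [0,1]^n}\Phi$. The key preliminary observation is that applying minimax duality to the inner game and maximizing out $\vg$ (which contributes $\abs{z_i}(1-p_i^a)$ since $1-p_i^a \geq 0$) rewrites $M$ as a fractional-knapsack LP,
\begin{align*}
M(\vp^a) = \min\left\{ \tfrac{1}{n}\sum_{i=1}^n t_i(1-p_i^a) \;:\; \vt \in [0,1]^n,\; \tfrac{1}{n}\sum_{i=1}^n t_i\abs{a_i} \geq \lambda \right\},
\end{align*}
whose greedy solution ordered by $\abs{a_i}/(1-p_i^a)$ descending recovers exactly the closed form of Lemma~\ref{lem:innergameabsval}. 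Because every $t_i \geq 0$, this form makes transparent that $M$ is non-increasing in each $p_i^a$, and that, with the ordering fixed, $M$ depends only on $p_j^a$ for $j \leq v_2$.

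For the case $\alpha \geq \frac{1}{2}$ I would argue by coordinatewise monotonicity. Here $\frac{\alpha - 1/2}{n} \geq 0$, so the linear term $\frac{\alpha-1/2}{n}\sum_i p_i^a$ is non-decreasing in each $p_i^a$; and since $M$ is non-increasing in each $p_i^a$, the term $-\frac{1}{2}M$ is also non-decreasing in each $p_i^a$. Thus $\Phi$ is non-decreasing in every coordinate over the whole box $[0,1]^n$, so it is minimized at $\vp^a = \vzero$, giving $\vp^{a*} = \vzero$ (strictly when $\alpha > \frac{1}{2}$). Note this needs only the sign of $\alpha - \frac{1}{2}$ together with monotonicity of $M$, so it sidesteps the ordering dependence entirely.

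For $\alpha < \frac{1}{2}$ I would use a single-coordinate exchange argument. Suppose, for contradiction, that at the optimum some $i > v_2$ has $\abs{a_i}/(1-p_i^{a*}) < \abs{a_{v_2}}/(1-p_{v_2}^{a*})$ strictly (Ordering~\ref{remark:abstorder} gives $\leq$ automatically). Increasing $p_i^a$ raises its ratio $\abs{a_i}/(1-p_i^a)$; as long as this ratio stays below $\abs{a_{v_2}}/(1-p_{v_2}^a)$, example $i$ remains below position $v_2$, so the identity and $\abs{a_j}$-values of positions $1,\dots,v_2$ — hence $v_2$ and $M$ — are unchanged. The strict inequality guarantees we can strictly increase $p_i^a$ up to the value equalizing the two ratios, and over that range $\Phi$ changes only through the linear term, by $\frac{\alpha - 1/2}{n}\,\Delta p_i^a < 0$ (since $\alpha < \frac{1}{2}$ and $\Delta p_i^a > 0$). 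This strictly decreases $\Phi$, contradicting optimality; by continuity of $\Phi$ the contradiction persists at the equalizing endpoint. Hence $\abs{a_{v_2}}/(1-p_{v_2}^{a*}) = \abs{a_i}/(1-p_i^{a*})$ for every $i > v_2$, as claimed. (I would note in passing that $w_{v_2}^* = 1-p_{v_2}^{a*} > 0$ by the footnote's no-full-abstention convention, and $\abs{a_{v_2}} > 0$ by the definition of $v_2$, so all ratios are well defined.)

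The main obstacle is precisely the interdependence of $\vp^a$ and Ordering~\ref{remark:abstorder} that the authors flag: a perturbation of $p_i^a$ can reshuffle the ordering and thereby change $v_2$ and the active set $\{1,\dots,v_2\}$ that determine $M$. The crux of the $\alpha<\frac{1}{2}$ argument is therefore establishing \emph{active-set invariance} under the chosen perturbation — bounding the increase in $p_i^a$ so that example $i$ never crosses above position $v_2$ — and handling the tie that occurs exactly when the ratios equalize. This tie is benign given the footnote to Ordering~\ref{remark:abstorder} permitting us to neglect tie-breaking and approach the boundary arbitrarily closely, so the strict decrease of $\Phi$ is inherited at the limit.
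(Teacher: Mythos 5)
Your proof is correct and takes essentially the same route as the paper's: substitute the inner value from Lemma \ref{lem:innergameabsval} into \eqref{abstaingame}, settle $\alpha \geq \frac{1}{2}$ by coordinatewise monotonicity, and settle $\alpha < \frac{1}{2}$ by exactly the paper's exchange argument (raise $p_i^a$ for $i > v_2$ toward the ratio-equalizing point while the top-$v_2$ set, hence $v_2$ and the inner value, stay fixed, contradicting optimality via the negative coefficient $\alpha - \frac{1}{2}$). Your fractional-knapsack rewriting of the inner value is a modest refinement rather than a different approach: it supplies monotonicity and continuity of the objective directly, where the paper argues the same points informally from its simplified expression \eqref{outerpaval}.
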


The near-optimal abstain rule we choose has the properties outlined by Lemma \ref{lem:gameabssolnprop}, 
but uses Ordering \ref{remark:regorder} of the examples, 
as the results of Theorem \ref{thm:abstgameexact} use this ordering.

A convenient consequence is that when $\vp^{a,alg}$ is played, 
Orderings \ref{remark:regorder} and \ref{remark:abstorder} of the examples are effectively the same.



%

\section{Conclusion}
We have presented an analysis of aggregating an ensemble for binary classification, 
using minimax worst-case techniques to formulate it as an game and 
suggest an optimal prediction strategy $\vg^*$, 
and PAC-Bayes analysis to derive statistical learning guarantees. 

The transductive setting, in which we consider predicting on many test examples at once, 
is key to our analysis in this manuscript, as it enables us to formulate intuitively appealing and nontrivial strategies 
$\vz^*, \vg^*$ for the game without further assumptions, by studying how the ensemble errors are allocated among test 
examples. We aim to explore such arguments further in future work.


\newpage
\bibliography{gameConfaistats}{}
\bibliographystyle{unsrt}

\appendix
\newpage

\section{Proofs}
\label{sec:proofs}

\begin{proof}[Proof of Lemma \ref{lem:game1val}]
It suffices to find the value of the dual game \eqref{game1dual}.
Consider the inner optimization problem faced by the predictor in this game
(where $\vz$ is fixed and known to it):
\begin{align}
\label{game1innerdual}
\mbox{Find: } \displaystyle \max_{\vg} \;\; \frac{1}{n} \vz^\top \vg \;\;\;\;\;\;\; \mbox{Such that: } -\ones{n} \leq \vg \leq \ones{n}
\end{align} 
The contribution of the $i^{th}$ example to the payoff of \eqref{game1innerdual} is $\frac{1}{n} g_i z_i$; 
to maximize this within the constraint $-1 \leq g_i \leq 1$, 
it is clear that the predictor will set $g_i = \sgn(z_i)$. 
The predictor therefore plays $\tilde{\vg} = \sgn(\vz)$, where $\sgn(\cdot)$ is taken componentwise.

With this $\tilde{\vg}$,
the game (for the average performance constraint) reduces to
\begin{eqnarray}
\label{game1outerdual}
&\mbox{Find: } \displaystyle \min_{\vz} \; \frac{1}{n} \vz^\top \tilde{\vg} 
= \min_{\vz} \; \frac{1}{n} \sum_{i=1}^n \abs{z_i} \nonumber \\ 
&\mbox{Such that: } 
\frac{1}{n} \vz^\top \va \geq \lambda \mbox{ and } -\ones{n} \leq \vz \leq \ones{n} 
\end{eqnarray}
For any $i \in [n]$, 
changing the value of $z_i$ from $0$ to $\epsilon$ raises the payoff by $\frac{1}{n} \abs{\epsilon}$, 
and can raise $\frac{1}{n} \vz^\top \va$ by at most $\frac{1}{n} \abs{a_i \epsilon}$. 
Thus, the data examples which allow nature to progress most towards satisfying the performance constraint are those with the highest $\abs{a_i}$, 
for which nature should set $z_i = \pm 1$ to extract every advantage 
(avoid leaving slack in the hypercube constraint). 
This argument holds inductively for the first $v-1$ examples before the constraint $\frac{1}{n} \vz^\top \va \geq \lambda$ is satisfied; 
the $v^{th}$ example can have $\abs{z_i} < 1$ from boundary effects, 
and for $i > v$, nature would set $z_i = 0$ to minimize the payoff.
(All this can also be shown by checking the KKT conditions.)

Consequently, the $\vz$ that solves \eqref{game1outerdual} can be defined by a sequential greedy procedure:
\begin{enumerate}
\item 
Initialize $\vz = 0$, ``working set" of examples $S = \emptyset$.
\item
Select an $i \in \displaystyle \argmax_{j \in [n] \setminus S} \abs{a_j}$ and set $S = S \cup \{ i \}$.
\item
If $\sum_{j \in S} \frac{1}{n} \abs{a_j} < \lambda$: set $z_i = \sgn(a_i)$ and go back to step 2.
\item
Else: set $z_i = \sgn(a_i) - \frac{1}{a_i} \lrp{ \sum_{j \in S} \abs{a_j} - n \lambda}$ and terminate, returning $\vz$.
\end{enumerate}

Call the vector set by this procedure $\tilde{\vz}$. 
Then under the constraints of \eqref{game1dual}, 
$\displaystyle \min_{\vz} \max_{\vg} \; \frac{1}{n} \vz^\top \vg = \max_{\vg} \; \frac{1}{n} \tilde{\vz}^\top \vg 
= \frac{1}{n} \sum_{i=1}^n \abs{\tilde{z}_i} 
= \frac{v-1}{n} + \frac{1}{n} \abs{ \sgn(a_v) - \frac{1}{a_v} \lrp{ \sum_{i=1}^v \abs{a_i} - n \lambda}} 
= \frac{v-1}{n} + \frac{1}{\abs{a_v}} \lrp{ \lambda - \sum_{i=1}^{v-1} \frac{1}{n} \abs{a_i} } = V$
, as desired.
\end{proof}

\begin{proof}[Proof of Theorem \ref{thm:game1soln}]
We have already considered the dual game in Lemma \ref{lem:game1val}, 
from which it is clear that if $\vz^*$ is played, then regardless of $\vg$,
$\frac{1}{n} \vz^*{^\top} \vg \leq \frac{v-1}{n} + \frac{1}{\abs{a_v}} \lrp{ \lambda - \frac{1}{n} \sum_{i=1}^{v-1} \abs{a_i} } = V$, 
and therefore $\vz^*$ is minimax optimal.

Now it suffices to prove that the predictor can force a correlation of $\geq V$ by playing $\vg^*$ in the primal game, 
where it plays first. 
After $\vg^*$ is played, 
nature is faced with the following problem:
\begin{eqnarray}
\mbox{Find: }&\displaystyle \min_{\vz} \;\; \frac{1}{n} \vz^\top \vg^* \label{optplaygame1}\\
\mbox{Such that: }& \frac{1}{n} \vz^\top \va \geq \lambda \mbox{ and } -\ones{n} \leq \vz \leq \ones{n} \notag
\end{eqnarray}

Now since $\frac{1}{n} \vz^\top \va \geq \lambda$, we have the inequality
\begin{align}
\label{game1optpayoff}
\frac{1}{n} \vz^\top \vg^* &= \frac{1}{n} \sum_{i=1}^{v} \sgn(a_i) z_i + \frac{1}{n \abs{a_v}} \sum_{i=v+1}^{n} a_i z_i \nonumber \\
&\geq \frac{1}{n} \sum_{i=1}^{v} \sgn(a_i) z_i + \frac{1}{\abs{a_v}} \lrp{ \lambda - \frac{1}{n} \sum_{i=1}^{v} a_i z_i } \nonumber \\
&= \frac{1}{n} \sum_{i=1}^{v-1} \sgn(a_i) z_i \lrp{1 - \frac{\abs{a_i}}{\abs{a_v}}} + \frac{\lambda}{\abs{a_v}}
\end{align}

For $i \in [1, v-1]$, $1 - \frac{\abs{a_i}}{\abs{a_v}} \leq 0$. 
So to minimize \eqref{game1optpayoff} (and solve \eqref{optplaygame1}), 
nature sets $z_i$ so that $\sgn(a_i) z_i$ is maximized. 
For each $i \leq v-1$, 
nature can force $\sgn(a_i) z_i = 1$ by setting $z_i = \sgn(a_i)$, 
and this is the maximum possible: $\sgn(a_i) z_i \leq \abs{z_i} \leq 1$. 

From \eqref{game1optpayoff}, 
we see that the values $\{z_i \}_{i \geq v}$ are irrelevant to the payoff, 
so any setting of $\vz$ that sets $z_i = \sgn(a_i)$ for $i \leq v-1$ 
(call such a setting $\tilde{\vz}$) will solve \eqref{optplaygame1}. 
\begin{align*}
\min_{\vz} \;\; \frac{1}{n} \vz^\top \vg^* &= \frac{1}{n} \tilde{\vz}^\top \vg^*
= \frac{1}{n} \sum_{i=1}^{v-1} \lrp{1 - \frac{\abs{a_i}}{\abs{a_v}}} + \frac{\lambda}{\abs{a_v}} = V
\end{align*}
so we have argued that $\vg^*$ forces a correlation of $\geq V$, and therefore it is minimax optimal.
\end{proof}

\begin{proof}[Proof of Theorem \ref{thm:abstgameexact}]
Note that the abstain game \eqref{abstaingame} is linear in all three vectors $\vg, \vp^a, \vz$. 
All three constraint sets are convex and compact, as well, 
so the minimax theorem can be invoked (see Prop. \ref{prop:game1duality}), yielding
\begin{align}
\label{abstvaldual}
V_{abst} &= \min_{\vp^a \in [0,1]^n} \min_{\vg \in [-1,1]^n} \max_{\substack{ \vz \in [-1,1]^n , \\ \frac{1}{n} \vz^\top \va \geq \lambda }} \nonumber \\
&\qquad\frac{1}{n} \sum_{i=1}^n \left[ p_i^a \alpha + \frac{1}{2} \lrp{1 - p_i^a} \lrp{ 1 - g_i z_i } \right] \nonumber \\
&= \max_{\substack{ \vz \in [-1,1]^n , \\ \frac{1}{n} \vz^\top \va \geq \lambda }}\ \min_{\vp^a \in [0,1]^n} \min_{\vg \in [-1,1]^n} \nonumber \\
&\qquad \frac{1}{n} \sum_{i=1}^n \left[ p_i^a \alpha + \frac{1}{2} \lrp{1 - p_i^a} \lrp{ 1 - g_i z_i } \right] \nonumber \\
&= \max_{\substack{ \vz \in [-1,1]^n , \\ \frac{1}{n} \vz^\top \va \geq \lambda }}\ \min_{\vp^a \in [0,1]^n} \nonumber \\
&\qquad \frac{1}{n} \sum_{i=1}^n \left[ p_i^a \alpha + \frac{1}{2} \lrp{1 - p_i^a} \lrp{ 1 - \abs{z_i} } \right] \nonumber \\
&= \max_{\substack{ \vz \in [-1,1]^n , \\ \frac{1}{n} \vz^\top \va \geq \lambda }}\ \; 
\frac{1}{n} \sum_{i=1}^n \min \lrp{ \alpha, \frac{1}{2} \lrp{1 - \abs{z_i}} }
\end{align}
From this point, the analysis is a variation on the proof of Lemma \ref{lem:game1val} 
from \eqref{game1outerdual} onwards.

Consider nature's strategy when faced with \eqref{abstvaldual}. 
A trivial upper bound on \eqref{abstvaldual} is $\alpha$ regardless of $\vz$, 
and for any $i$ it is possible to set $z_i$ such that $\abs{z_i} \leq 1 - 2 \alpha$ 
without lowering \eqref{abstvaldual} from $\alpha$. 
So nature can first set $\vz = (1 - 2 \alpha) \sgn(\va)$, 
to progress most towards satisfying the $\frac{1}{n} \vz^\top \va \geq \lambda$ 
constraint while maintaining the value at $\alpha$. 

If $\vz = (1 - 2 \alpha) \sgn(\va)$ meets the constraint $\frac{1}{n} \vz^\top \va \geq \lambda$, i.e. 
$\frac{1 - 2 \alpha}{n} \sum_{i=1}^n \abs{a_i} \geq \lambda \implies \alpha \leq \frac{1}{2} \lrp{1 - \frac{n\lambda}{\sum_{i=1}^n \abs{a_i}}}$, 
then the value is clearly $\alpha$.

Otherwise, i.e. if $c := \lambda - \frac{1 - 2 \alpha}{n} \sum_{i=1}^n \abs{a_i} > 0$, 
then nature must start with the setting $\vz' := (1 - 2 \alpha) \sgn(\va)$ and continue raising $\abs{z_i}$ for some indices $i$ 
until the constraint $\frac{1}{n} \vz^\top \va \geq \lambda$ is met. 
$c$ can be thought of as a budget that nature must satisfy by starting from $\vz'$ 
and adjusting $\{z'_i\}_{i \in S}$ away from $0$ for some subset $S$ of the indices $[n]$.

For any $i$, raising $\abs{z'_i}$ in this way by some small $\epsilon$ raises nature's payoff by $\frac{\epsilon}{2}$, 
and lowers the remaining budget by $\epsilon \abs{a_i}$.
Therefore, to satisfy the budget with maximum payoff, 
the examples get $\abs{z_i}$ set to $1$ in descending order of $\abs{a_i}$ 
(Ordering \ref{remark:regorder}, which we therefore use for the rest of this proof)
until the remaining budget to satisfy runs out.

This occurs on the $w^{th}$ example (using Ordering \ref{remark:regorder}), 
where $w = \min \left\{ i \in [n] : \frac{2 \alpha}{n} \sum_{j=1}^i \abs{a_j} \geq c \right\}$, 
which after a little algebra is equivalent to the statement of the theorem. 

Substituting this into \eqref{abstvaldual}, we get
\begin{align}
\label{vabstexact}
V_{abst} &= \frac{1}{n} \lrp{ \sum_{i=1}^{w-1} \min \lrp{ \alpha, 0} + \frac{1}{2} \lrp{1 - \abs{z_w}} + \sum_{i=w+1}^n \alpha } \nonumber \\
&= \alpha \lrp{1 - \frac{w}{n}} + \frac{1}{2 n \abs{a_w}} \lrp{cn - 2 \alpha \sum_{j=1}^{w-1} \abs{a_j} } 
\end{align}
Now by definition of $w$, 
$\frac{2 \alpha}{n} \sum_{j=1}^{w-1} \abs{a_j} \leq c \leq \frac{2 \alpha}{n} \sum_{j=1}^w \abs{a_j}$. 
Using this to bound \eqref{vabstexact} gives the result.
\end{proof}

\begin{proof}[Proof of Lemma \ref{lem:innergameabsval}]
Just as with the non-abstaining game in Lemma \ref{lem:game1val}, 
it is clear that minimax duality applies to this game. 
Therefore, we can find the value of the dual game instead, 
which is done directly using the same reasoning as in the proof of Lemma \ref{lem:game1val}. 
\begin{align*}
\max_{\vg \in [-1,1]^n}& \min_{\substack{ \vz \in [-1,1]^n , \\ \frac{1}{n} \vz^\top \va \geq \lambda }}\; \frac{1}{n} \sum_{i=1}^n z_i \lrp{1 - p_i^a} g_i \\
&= \min_{\substack{ \vz \in [-1,1]^n , \\ \frac{1}{n} \vz^\top \va \geq \lambda }}\; \max_{\vg \in [-1,1]^n} \frac{1}{n} \sum_{i=1}^n z_i \lrp{1 - p_i^a} g_i \\
&= \min_{\substack{ \vz \in [-1,1]^n , \\ \frac{1}{n} \vz^\top \va \geq \lambda }}\; \frac{1}{n} \sum_{i=1}^n \lrp{1 - p_i^a} \abs{z_i} \\
&= \frac{1}{n} \sum_{i=1}^{v_2 - 1} \lrp{1 - p_i^a} + \frac{1 - p_{v_2}^a}{\abs{a_{v_2}}} \lrp{ \lambda - \frac{1}{n} \sum_{i=1}^{v_2 - 1} \abs{a_i} }
\end{align*}
\end{proof}

\begin{proof}[Proof of Lemma \ref{lem:gameabssolnprop}]
Define $v_2$ as in Lemma \ref{lem:innergameabsval}; 
throughout this proof, we use Ordering \ref{remark:abstorder} of the examples. 
Substituting Lemma \ref{lem:innergameabsval} into \eqref{abstaingame}, 
the value of the game is
\begin{align}
\label{outerpaval}
V_{abst} = \frac{1}{2} + \frac{1}{n} \min_{\vp^a \in [0,1]^n} \Bigg[ &\sum_{i=1}^{v_2} \lrp{ \alpha p_i^a - \frac{1}{2} } \nonumber \\ 
&+ \sum_{i=v_2+1}^n \lrp{ \alpha - \frac{1}{2}} p_i^a \Bigg]
\end{align}

It only remains to (approximately) solve the minimization in \eqref{outerpaval}, 
keeping in mind that $v_2$ depends on $p_i^a$ because of the ordering of the coordinates (Ordering \ref{remark:abstorder}). 

If $\alpha \geq \frac{1}{2}$, then regardless of $i$, 
neither sum of \eqref{outerpaval} can increase with increasing $p_i^a$. 
In this case, the minimizer $\vp^{a*} = \vzero$, identically zero for all examples. 

If $\alpha < \frac{1}{2}$, 
consider an example $z := x_i$ for some $i > v_2$. 
We prove that 
$\frac{\abs{a_{v_2} }}{1 - p_{v_2}^{a*}} = \frac{\abs{a_i }}{1 - p_{i}^{a*}}$. 
If this is not true, i.e. $\frac{\abs{a_{v_2} }}{1 - p_{v_2}^{a*}} > \frac{\abs{a_i }}{1 - p_{i}^{a*}}$, 
then $p_i^{a*}$ can be raised while keeping $z$ out of the top $v_2$ examples. 
This would decrease the second sum of \eqref{outerpaval} because $\alpha - \frac{1}{2} < 0$, 
which contradicts the assumption that $p_i^{a*}$ is optimal.
\end{proof}

\begin{proof}[Proof of Theorem \ref{thm:gameabsapprox}]
Under the optimal strategy $\vp^{a*}$, 
Orderings \ref{remark:regorder} and \ref{remark:abstorder} are identical for our purposes, so $v_2 = v$. 
Revisiting the argument of Lemma \ref{lem:innergameabsval} with this information, 
the predictor's and Nature's strategies $\vg^*$ and $\vz^*$ 
are identical to their minimax optimal strategies in the non-abstaining case. 
Substituting $\vp^{a,alg}$ into \eqref{outerpaval} gives the worst-case loss after simplification.
\end{proof}

\section{PAC-Style Results for the Abstain Game}
\label{sec:pbcalc}
This appendix contains the calculations used to prove the results of Section \ref{sec:pbabstguarantees}.

If the algorithm abstains at all ($\alpha < \frac{1}{2}$), 
the overall probability that it does so is
\begin{align*}
\frac{1}{n} \sum_{i=1}^n p_i^{a,alg} &= \frac{1}{n} \sum_{i > v} \lrp{1 - \frac{\abs{a_i}}{\abs{a_v}} } \\
&\leq 1 - \frac{1}{n} \sum_{i \leq v} \abs{a_i} - \frac{1}{n} \sum_{i > v} \frac{\abs{a_i}}{\abs{a_v}} \\
&\leq 1 - \lambda - \frac{1}{n} \sum_{i > v} \frac{\abs{a_i}}{\abs{a_v}}
\end{align*}
Using a union bound with \eqref{1phaselambda} gives the result on the abstain probability.

When $\alpha < \frac{1}{2}$, the probability that the predictor predicts an incorrect label ($\neq \perp$) under minimax optimal play 
depends on nature's play $\vz$. 
It can be calculated as (w.p. $\geq 1 - \delta$):
\begin{align}
\label{absterrcalc}
\frac{1}{2n} &\sum_{i=1}^n \lrp{1 - p_i^{a*}} \lrp{ 1 - g_i^* z_i } \\ 
&= \frac{1}{2n} \sum_{i=1}^v \lrp{ 1 - \sgn(a_i) z_i } + \frac{1}{2n} \sum_{i>v} \frac{\abs{a_i}}{\abs{a_v}} \lrp{ 1 - \frac{a_i}{\abs{a_v}} z_i } \nonumber \\
&= \frac{1}{2n} \Bigg[ \sum_{i=1}^n \min \lrp{1, \frac{\abs{a_i}}{\abs{a_v}} } - \sum_{i \leq v} z_i \sgn(a_i) - \sum_{i>v} z_i \sgn(a_i) \frac{a_i^2}{a_v^2} \Bigg] \nonumber \\
&:= p_{inc} (\vz)
\end{align}

Under the constraints $-\ones{n} \leq \vz \leq \ones{n}$ and $\frac{1}{n} \vz^\top \va \geq \lambda$, 
the maximizer of \eqref{absterrcalc} w.r.t. $\vz$ is indeed $\vz^*$, 
so the chance of predicting incorrectly is
\begin{align*}
p_{inc} (\vz) \leq p_{inc} (\vz^*) &= \frac{1}{2n} \left[ \sum_{i=1}^n \min \lrp{1, \frac{\abs{a_i}}{\abs{a_v}} } - \sum_{i \leq v} 1 \right] \\
&\leq \frac{n-v}{2n} = \frac{1}{2} \lrp{1 - \frac{v}{n} } \nonumber \\
&\leq \frac{1}{2} \lrp{1 - \lambda} - \frac{1}{2n} \sum_{i=1}^v \lrp{1 - \abs{a_i}}
\end{align*}
Using a union bound with \eqref{1phaselambda} gives the result on the probability of an incorrect non-abstention.

\section{Application of the Minimax Theorem}
\begin{prop}
\label{prop:game1duality}
Let $R = \{ \vv \in \RR^n : -\ones{n} \leq \vv \leq \ones{n} \}$ and 
$A = \left\{ \vz \in \RR^n : \frac{1}{n} \vz^\top \va \geq \lambda \right\}$. 
Then 
\begin{align*}
\max_{\vg \in R} \min_{\vz \in R \cap A} \;\; \frac{1}{n} \vz^\top \vg = \min_{\vz \in R \cap A} \max_{\vg \in R} \;\; \frac{1}{n} \vz^\top \vg
\end{align*}
\end{prop}
\begin{proof}
Both $R$ and $A$ are convex and compact, as is $R \cap A$. 
The payoff function $\frac{1}{n} \vz^\top \vg$ is linear in $\vz$ and $\vg$. 
Therefore the minimax theorem (e.g. \cite{CBL06}, Theorem 7.1) applies, giving the result.
\end{proof}

\end{document}